\documentclass[11pt]{amsart}
\usepackage{fullpage}
\usepackage{amsmath, amssymb, amsthm, amsfonts}
\usepackage[foot]{amsaddr}
\usepackage{mathtools}
\usepackage[inline]{enumitem}

\usepackage{graphicx}
\graphicspath{{./figs/}} %note trailing /
\usepackage{algpseudocode}
\usepackage{algorithm}

\usepackage{epsfig}
\usepackage{enumitem}

\newcommand{\ie}{{\it i.e.}}
\newcommand{\eg}{{\it e.g.}}

\DeclareMathOperator*{\argmin}{arg\,min}

\usepackage{color}
\usepackage[colorlinks=true,linkcolor=red,citecolor=blue,urlcolor=cyan]{hyperref}
\usepackage{cleveref}% load last

\newtheorem{thm}{Theorem}[section]
\newtheorem{lem}[thm]{Lemma}
\newtheorem{assum}[thm]{Assumption}
\newtheorem{rem}[thm]{Remark}

\usepackage[backend=biber,date=year,giveninits=true,sorting=nyt,natbib=true,maxcitenames=2,maxbibnames=10,url=false,doi=true,backref=false]{biblatex}
\addbibresource{refs.bib}
\renewbibmacro{in:}{\ifentrytype{article}{}{\printtext{\bibstring{in}\intitlepunct}}}

\begin{document}
\title{A dynamical systems based \\ framework for  dimension reduction}
\thanks{R. Yoon and B. Osting acknowledge partial support from NSF DMS 17-52202.}

\author{Ryeongkyung Yoon}
\address{Department of Mathematics, University of Utah, Salt Lake City, UT}
\email{\{rkyoon,osting\}@math.utah.edu}      

\author{Braxton Osting}
%\address{Department of Mathematics, University of Utah, Salt Lake City, UT}
%\email{osting@math.utah.edu}

\keywords{dimension reduction, 
equation discovery,
dynamical systems, 
adjoint method, 
optimal transportation}

\subjclass[2020]{
34H05 \and % Control problems involving ordinary differential equations
68T07% Artificial neural networks and deep learning
} 

\date{\today}

\begin{abstract} 
We propose a novel framework for learning a low-dimensional representation of data based on nonlinear dynamical systems, which we call \emph{dynamical dimension reduction} (DDR). In the DDR model, each point is evolved via a nonlinear flow towards a lower-dimensional subspace; the projection onto the subspace gives the low-dimensional embedding. Training the model involves identifying the nonlinear flow and the subspace. Following the equation discovery method, we represent the vector field that defines the flow using a linear combination of dictionary elements, where each element is a pre-specified linear/nonlinear candidate function. A regularization term for the average total kinetic energy is also introduced and motivated by optimal transport theory. We prove that the resulting optimization problem is well-posed and establish several  properties of the DDR method. We also show how the DDR method can be trained using a gradient-based optimization method, where the gradients are computed using the adjoint method from optimal control theory. The DDR method is implemented and  compared on synthetic and example datasets to other dimension reductions methods, including PCA, t-SNE, and Umap. 
\end{abstract}

\maketitle

\section{Introduction} 
There has been a growing effort to develop dimension reduction techniques, which find an embedding of high-dimensional data into a meaningful representation space of smaller dimension. 
Such methods can be applied to a variety of machine learning tasks such as data visualization, outlier detection, and clustering.  
The most traditional approach is the principal component analysis (PCA) \cite{PCA},  which determines the linear subspace of a fixed dimension that captures the most variance in the data. PCA is a very practical method for extracting characteristic features in massive datasets and has a relatively small computational cost. 
However, as a linear method, PCA may not perform well in learning complex or nonlinear structures in data. 
In particular, since PCA equally weights all pairwise distances within the data, it favors preserving global structure over local structure and it can lose local information within a dataset. 

To overcome these limitations, a variety of nonlinear methods have been proposed, including $t$-distributed stochastic neighbor embedding (t-SNE) \cite{tsne},  Uniform manifold approximation and projection (Umap) \cite{umap}, kernel PCA, spectral embeddings, autoencoders \cite{auto,VAE}.
In particular, an autoencoder learns an  
\emph{encoder} $\mathcal{E} \colon \mathbb R^d \to \mathbb R^k$ as well as a \emph{decoder} $\mathcal{D} \colon \mathbb R^k \to \mathbb R^d$ so that the composition $\mathcal D \circ \mathcal E$ approximates the identity when applied to the data. 
The encoding step can be viewed as a nonlinear dimension reduction mapping and the reduced-dimension space is referred to as the \emph{latent space}; see \cref{subsec:Generative} for more details. 

Our \emph{goal} in this paper will be do develop a dimension reduction method based on nonlinear dynamical systems. The data is evolved via a nonlinear flow towards a lower-dimensional subspace, the latent space; the projection onto the latent space gives the low-dimensional embedding of the data. Our loss function for training the model is a modification of the loss function for an autoencoder; it penalizes the projection residual for the latent space. 
In the past few years, there has been significant research on the connections between dynamical systems and (residual) neural networks \cite{ChenRBD18,Regularity,ffjord,Haber}; we will discuss these related works and describe how these ideas differ from our model in \cref{subsec:NODE}.

\medskip

We begin with a motivating example that helps illustrate (and prompts many questions) how dynamical systems might be used to develop a low-dimensional representation of data.

\subsubsection*{Motivating example} \label{s:IntroExample}
Suppose we have data $X = [ x_1 \mid \cdots \mid x_N] \in \mathbb R^{d\times N}$ with $N > d$ with singular value decomposition, 
$X = U \Sigma V^\ast$, where
the singular values are arranged in decreasing order, \ie, $\sigma_1 \geq \cdots \geq \sigma_d$. 
Let $U_k \in \mathbb R^{d\times k}$ be the first $k$ columns of $U$ and
 $U_{-k} \in \mathbb R^{d\times d-k}$ be the remaining $d-k$ columns of $U$, \ie, $U = [U_k, U_{-k}]$.
The low dimensional representation of this data using PCA would be $\{U_k^\ast x_i\}_{i \in [N]} \subset \mathbb R^k$ with mean squared residual error  $\frac{1}{N}\sum_{i \in [N]}  \| x_i - U_k U_k^\ast x_i\|_2^2 = \frac{1}{N} \sum_{j = k+1}^d \sigma_j^2(X) = MSE_{PCA}$.  
 Alternatively, we can construct a linear dynamical system that approximately gives this low dimensional representation. Define the matrix $A_\varepsilon \in \mathbb R^{d \times d}$ by $A_\varepsilon = \frac{1}{T} U \textrm{diag}(0,\ldots, 0, \log \varepsilon, \ldots, \log \varepsilon ) U^\ast$, where 0 is repeated $k$ times and $\log \varepsilon $ is repeated $d-k$ times. We then consider the initial value problem for each $i \in [N]$, 
\begin{align*}
& \dot h_i(t)  = A_\varepsilon h_i(t) \\
& h_i(0)  = x_i,
\end{align*}
where $x_i$ denotes the $i$-th column of $X$. The solution is given by $h_i(t) = e^{A_\varepsilon t} x_i$, $i\in [N]$, so that at time $t=T$, we have 
$$
h_i(T) = e^{A_\varepsilon T} x_i
= U  \textrm{diag}(1,\ldots, 1,  \varepsilon, \ldots,  \varepsilon ) U^\ast x_i 
= \left( U_kU_k^\ast + \varepsilon U_{-k}U_{-k}^\ast \right) x_i
$$
Each data point $x_i$ evolves in $\mathbb R^d$ towards a low dimensional subspace $h_i(T)$, with  
$$
\|  h_i(T) - U_k U_k^\ast x_i \|_2^2 = \varepsilon^2  \|  U_{-k}U_{-k}^\ast x_i  \|_2^2 
$$
This implies that 
$$
\frac{1}{N}
\sum_{i \in [N]} \|  h_i(T) - U_k U_k^\ast x_i \|_2^2 
= \frac{\varepsilon^2 }{N}  \sum_{j = k+1}^d \sigma_j^2(X)
= \varepsilon^2 MSE_{PCA}
$$
In other words, the mean squared distance between the solution at time $t=T$ and the best $k$-dimensional representation of the data in $\mathbb R^d$ is $O(\mathbb \varepsilon^2) $.

\subsubsection*{Our framework}
In this paper, we formulate a method, which we call the \emph{dynamical dimension reduction} (DDR) model, that generalizes the above example in several ways: 
\begin{enumerate*}[label=(\roman*)]
\item We allow the right-hand side (RHS) of the dynamical system to be a nonlinear vector field. 
\item We formulate an optimization problem that finds a RHS which evolves the data towards a low dimensional representation. 
\item We also introduce a regularization term in the objective function based on the mean total kinetic energy of the trajectories, which preserves the local and global structure of the data. 
\end{enumerate*}

For each data point $x_i \in \mathbb R^d$, $i \in [N]$, we introduce a time-dependent hidden variable $h_i \in \mathbb R^d$ that is governed by the dynamical system 
\begin{align*}
    \frac{dh_i}{dt} &= \Phi(h_i; \beta)\\
    h_i(0) & = x_i,
\end{align*}
where the vector field, $\Phi(\cdot;\beta)$,  is parametrized by $\beta$. 
A description of the parameterization of $\beta \mapsto \Phi(\cdot;\beta)$ using a dictionary of linear and nonlinear terms will be given in \cref{sec3}.
For fixed final time $T>0$, the low-dimensional embedding $x_i \xmapsto{\mathcal{E}} y_i$ is  defined using the solution at time $t=T$; the data point $x_i = h_i(0)$ is encoded in a lower ($k<d$) dimensional space via $y_i = Q h_i(T)$, where $Q \in \mathbb R^{k \times d}$ is a matrix with orthonormal rows. 
Training the network then involves learning the parameters $\beta$ and $Q$. To achieve this goal, we introduce an objective function of the form 
\[
J(\beta, Q) := \frac{1}{N}\sum_{i \in [N]}  \| h_i(T)-Q^\ast Q h_i(T)\|^2 +  \mu R(h_i; \beta )
\]
and minimize $J$ over an appropriate set of parameters. The first term in the objective is the mean squared projection residual  and encourages the dynamical system to flatten the data as time evolves. The second term is a regularization term that will be used to enforce smoothness on the vector field $\Phi$.  In particular, we choose a regularization term of the form 
\[
R(h_i;\beta) = \int_0^T  \| \Phi(h_i;\beta) \|^2~dt,
\]
which is a measure of the the kinetic energy of the trajectory $h_i(t)$, $t \in [0,T]$. This regularization term can also be interpreted in terms of optimal transport theory and the 
Wasserstein distance between the data distribution at initial and final times \cite{OT, OT_BB}; see further discussion in \cref{subsec:OT}.

\subsubsection*{Overview of results} 
In \cref{sec3}, we formulate the optimization problem in more detail, including the proof of several theoretical results about the DDR method. We prove the existence of a minimizer of the proposed optimization problem (see \cref{subsec:existence}). We also show that the gradient of the objective function $J(\beta,Q)$ with respect to the parameters can be efficiently computed using the adjoint method from optimal control theory (\cref{thm:adjoint}). We introduce an alternating optimization method, described in \cref{s:Qsubproblem}, that alternatively updates $\beta$ and $Q$. We show that the $Q$-subproblem can be explicitly solved in terms of the singular value decomposition. In \cref{sec4}, we present a few properties of the DDR model. We prove the stability/generalizability of the embedding  $x_i \xmapsto{\mathcal{E}} y_i$ (\cref{thm:stability}).
We also revisit the motivating linear example discussed above and reproduce the result of PCA based on the DDR framework (\cref{lem:opt_linear}). 

In \cref{s:GenModel}, we extend the DDR method as a generative model by approximating the decoder $y_i \xmapsto{\mathcal{D}} x_i$ based on the time-reversal of the learned dynamical system (see also \cref{t:Decoder}). 

Finally, in \cref{sec:experiments}, we describe the results of several numerical experiments that examine the performance of the DDR method on a variety of synthetic and example datasets. In these experiments, the DDR method achieves a competitive lower dimensional embedding with respect to other methods; PCA, t-SNE, and Umap. We illustrate that nonlinearity in the vector field of the dynamical system governing the time-evolution of a given data increases the representability/expressibility of the dimension reduction mapping. We also exhibit how stable the encoder is to the noise in the dataset and illustrate the DDR-based generative model. 

We conclude in \cref{s:disc} with a discussion of the DDR method and ideas for several future directions.

\clearpage 

\section{Background and related work}\label{sec2}
In this section, we review some related work that motivates the framework of the DDR method: autoencoders, neural ODEs, equation discovery, and optimal transportation. 

\subsection{Autoencoders}\label{subsec:Generative}
An \emph{autoencoder} \cite{auto} is comprised of two neural networks:  an \emph{encoder} $\mathcal E \colon \mathbb R^d \to\mathbb R^k$ and a \emph{decoder} $\mathcal D \colon \mathbb R^k \to \mathbb R^d$. The networks are trained so that the composition, $\mathcal D \circ \mathcal E$, approximates the identity on the data in terms of the mean residual error, 
 $\frac{1}{N} \sum_{i\in[N]} \|x_i - \mathcal D(\mathcal E(x_i))\|^2$. 
 Since $k\ll d$, we can interpret an autoencoder passing the data through a bottleneck structure while preserving as much information as possible. The encoder can be viewed as a nonlinear dimension reduction mapping into the latent space, $\mathbb R^k$. 
 
 However, if the capacity of the model is very large (\ie, there is a large degree of freedom in the autoencoder), it could fail to learn meaningful features in the data manifold and  to achieve the generative purpose \cite{Goodfellow}. 
To prevent this from happening, there are several ways to regularize an autoencoder, including
 (i) a penalizing regularity term can be introduced to promote sparsity in the model weights and reduce the sensitivity of the model with respect to given data, or
 (ii) reinterpreting the model based on variational inference, referred to as variational autoencoders (VAEs)  \cite{VAE}. 
VAEs estimate a posterior conditional probability of the encoder from a known prior distribution on the latent vector. 

 \subsection{Neural ODEs}\label{subsec:NODE}
In \cite{Haber},  the connection between residual neural networks with infinite depth and their continuum limit---a dynamical system---was developed. This idea was extended by \cite{ChenRBD18} and the framework was named Neural ODE (NODE). 
Here, for an input datapoint $x \in \mathbb R^d$, we introduce a time-dependent hidden variable  $h(t)\in\mathbb R^d$ that is governed by the dynamical system 
\begin{subequations}
  \begin{align}\label{eq:NODE}
     \frac{dh(t)}{dt} &= \Phi(h(t),t;\Theta),\\
     h(0) & = x.
 \end{align}
\end{subequations}

The underlying vector field $\Phi$ is represented using a  feedforward neural network \cite{ChenRBD18,Haber}.  Instead of backpropagation, the NODE is trained using the adjoint method from optimal control theory. 
Recently, the NODE framework has been further developed and extended in a variety of ways, including  
(i) demonstrating the NODE architecture improves accuracy and stability of the model \cite{antisym,ChenRBD18}, 
(ii) generalizing the network by allowing time dependence in the parameters  \cite{AdapNODE}, and 
(iii) modifying the mathematical framework of NODE via statistical process \cite{Gaussian} or partial differential equation \cite{PDE}.

Recently, NODE models have also been used to study unsupervised learning problems, particularly density estimation.  
\cite{ChenRBD18,Regularity,ffjord} have developed a novel and easily computed framework for a continuous normalizing flow that minimizes the difference in log densities for the data $x$ and hidden variable $h$. 
In particular,  \cite{Regularity} introduces a well-conditioned ODE-based model by imposing regularity via optimal transportation theory. 
However, this framework is not applicable for dimension reduction because the dimension of the latent space should have the same dimension as the data. 
Motivated by VAEs,  \cite{ChenRBD18,NVAE} proposes time-invariant generative models for time series. However, it doesn't completely rely on the NODE model because the data is encoded by a recurrent neural network (RNN) whereas latent vectors are decoded by NODE.

\subsection{Equation Discovery}\label{subsec:ED}
Another method to parameterize a vector field $\Phi$ is to use the equation discovery method introduced by \cite{EqnDiscovery}. 
In contrast to NODE, the equation discovery method writes the vector field $\Phi$ as a linear combination of dictionary functions,
 \begin{equation*}
 \Phi= \beta \Xi(h(t)), 
 \end{equation*}
where $\Xi$ consists of pre-specified candidate functions and $\beta$ is a matrix of coefficients to be determined. 
Equation discovery has primarily been applied to learn the underlying equations that describe a physical system from measured data. To encourage sparsity on the representation of $\Phi$ in these applications, \cite{EqnDiscovery} proposes the Sparse Identification of Nonlinear Dynamics (SINDy) method, which uses iterative thresholds least-squares methods. This method was proved to be convergent in \cite{SINDy}. We recently employed equation discovery methods to develop a non-autonomous equation discovery method (NAED) for the time signal classification problem \cite{NAED}.

\subsection{Optimal transportation theory and the Wasserstein metric}\label{subsec:OT}
Here, we briefly recall some concepts from optimal transportation theory that help motivate our choice of regularization function. For simplicity, we ignore technical details and refer to \cite{OT} for a more rigorous discussion. The squared \emph{2-Wasserstein distance} between probability measures $\mu_0,\mu_T \in \mathcal{P}(\mathbb R^d)$ can be written
\begin{equation}
\label{eq:classicOT}
d^2_W(\mu_0,\mu_T) =   \inf_{P_\# \mu_0 = \mu_T}  \ \  \int \|x-P(x)\|^2 d\mu_0(x) ,
\end{equation}
Here, $P \colon \mathbb R^d \to  \mathbb R^d$ is a \emph{transportation plan} and the pushfoward constraint ($P_\# \mu_0 = \mu_T$) means that $\mu_T(A) = \mu_0(P^{-1}(A))$ for any set $A \subset \mathbb R^n$. This constraint can be interpreted that a transportation plan $P$ rearranges the density corresponding to the measure $\mu_0$ into the density corresponding to measure $\mu_T$. Eq. \eqref{eq:classicOT} is known as the \emph{Monge formulation} of the 2-Wasserstein distance. 

There is also an equivalent dynamical formulation of the Wasserstein metric due to Benamou and Brenier \cite{OT_BB}. Here we think about \emph{continuously} transporting  mass from $\mu_0$ to $\mu_T$. We introduce a family of measures $\mu_t\in \mathcal{P}(\mathbb R^d)$, $t \in [0,T]$ and abuse notation by also denoting their densities by $\mu_t$, $t \in [0,T]$. The Benamou-Brenier formulation is then to find the time-dependent velocity field $\Phi \colon \mathbb R^d \times [0,T] \to \mathbb R^d$ so that when the density evolves according to the \emph{continuity equation}, the \emph{action} is minimized: 
\begin{subequations}\label{eq:OT_BB}
 \begin{align}
\label{eq:OT_BB_min}
d^2_W(\mu_0,\mu_T)  =  \inf_{\mu_t, \Phi} \quad  &  T \cdot \int_0^T \int \|\Phi(x(t))\|^2 d\mu_t(x)dt \\
\label{eq:OT_BB_cont}
\text{s.t.}  \quad & \frac{\partial \mu_t}{\partial t} + \nabla\cdot(\mu_t \Phi) = 0 \\
& \mu_{t=0}=\mu_0, \quad  \mu_{t=T} = \mu_T. 
\end{align}
\end{subequations}
In particular, if $P$ is the optimal transportation map in the Monge formulation \eqref{eq:classicOT} and we define $P_t = \frac{T-t}{T} I + \frac{t}{T} P$, the optimal solution to \eqref{eq:OT_BB} is given by $\mu_t = (P_t)_\# \mu_0$.

We now consider two pointsets $\{x_i\}_{i \in [N]} \subset \mathbb R^d$ and $\{\tilde{x}_i\}_{i \in [N]}\subset \mathbb R^d$ with the same cardinality and their corresponding empirical distributions 
$$ 
\mu_0(x) = \frac{1}{N} \sum_{i \in [N]} \delta(x-x_i) 
\quad  \textrm{and} \quad 
\mu_T(x) = \frac{1}{N} \sum_{i \in [N]} \delta(x-\tilde{x}_i). 
$$
In this case, the Benamou-Brenier formulation reduces to finding trajectories $\{x_i(t)\} \subset \mathbb R^d$,  $i \in [N]$,  $t \in [0,T]$ and the time-dependent velocity field $\Phi \colon \mathbb R^d \times [0,T] \to \mathbb R^d$ satisfying
\begin{subequations} \label{eq:OT_BBL}
\begin{align}
d^2_W(\mu_0,\mu_T)  =  \inf_{\Phi, x_i(t)} \quad  &  \frac{T}{N} \sum_{i \in [N]} \int_0^T \|\Phi(x_i(t)) \|^2 dt \\
\text{s.t.}  \quad & \frac{d x_i}{dt} =  \Phi(x_i(t)), \quad i \in [N], \ t \in [0,T]  \\
& x_i(t=0) = x_i, \ x_i(t=T) = \tilde{x}_i. 
\end{align}
\end{subequations}
This can be viewed as a \emph{Lagrangian perspective} for the Benamou-Brenier formulation while \eqref{eq:OT_BB} is the \emph{Eulerian perspective}. If we re-enumerate the points so that $\tilde{x}_i = P(x_i)$, where $P$ is the optimal transportation plan in the Monge formulation, then the optimal trajectories are simply given by 
$x_i(t) = \left( 1 - \frac{t}{T}\right) x_i + \frac{t}{T} \tilde{x}_i$
and the optimal cost is $d^2_W(\mu_0,\mu_T) = \frac{1}{N} \sum_{i \in [N]} \| x_i - \tilde{x}_i \|^2$. 
That is, the velocity field with the smallest action simply linearly transports each point from its initial to final position at a constant speed.

\section{Dynamical Dimension Reduction}
\label{sec3}
In this section, we 
formulate our proposed dynamical dimension reduction model (\cref{subsec:main}), 
prove the well-posedness of the model (\cref{subsec:existence}), and 
describe a gradient-based optimization method for training (\cref{subsec:adjoint,s:Qsubproblem,s:algorithm}).  

\subsection{Dynamical Dimension Reduction Model}\label{subsec:main}
Let the data $x_i\in \mathbb R^{d}$, $\forall i \in [N]$ be given. We propagate the data to a lower dimensional subspace using the solution to a dynamical system, where the solution is initialized at the data and at a fixed terminal time $T$, the solution lies in (or very near) the low dimensional subspace. To this end, we define hidden variables $h_i \colon [0,T]\to \mathbb R^d$ for $i \in [N]$ that describe the trajectories of each data point and satisfy the dynamical system,
\begin{subequations} 
\label{eq:ODE}
\begin{align}
    \frac{d }{dt} h_i(t)& = \Phi\left(h_i(t) ;\beta \right), \quad \forall i \in [N] \label{eq:ODE_main} \\
 h_i(0) &= x_i\label{eq:ODE_ini}.
\end{align}
\end{subequations}
Here $\beta\in\mathbb R^{d\times d_n}$ is matrix used to parameterize the vector field $\Phi\colon\mathbb R^d \to \mathbb R^d $.  The solution to \eqref{eq:ODE} at time $T$ is used to define a low dimensional representation $y_i\in \mathbb R^{k}$ with $k\ll d$ as 
\begin{equation}\label{eq:low_embedding}
    y_i = Qh_i(T), \qquad \forall i \in [N],
\end{equation}
where 
\begin{equation}
\label{e:Ok}
Q\in \mathcal O_k := \{Q \in \mathbb R^{k \times d} \colon QQ^\ast  = I_k\} 
\end{equation}
is a matrix with orthonormal rows. 
As described further below, the parameters in this model, $\beta\in\mathbb R^{d\times d_n}$ and $Q \in \mathcal O_k$, will be optimized (a.k.a. trained) as to obtain a low-dimensional embedding of the data. 
We will refer to this mapping $\mathcal{E} \colon \mathbb R^d \to \mathbb R^k$ that assigns
$ x_i \xmapsto{\mathcal{E}} y_i$ 
as the \emph{dynamical dimension reduction (DDR) embedding}. 

We have chosen an autonomous vector field, $\Phi$, on the right hand side of \eqref{eq:ODE_main} in this work for simplicity, however a non-autonomous vector field could also be used.
We represent the vector field $\Phi$ using a dictionary of functions, as in the equation discovery method described in \cref{subsec:ED}. 
In our model,  $\Phi$ is parameterized by  
\begin{equation}\label{eq:rhs}
    \Phi(h_i;\beta) = \beta \Xi(h_i),\qquad \forall i \in [N],
\end{equation}
for a pre-specified dictionary  $\Xi(h_i)=[\xi_1(h_i),  \cdots,\xi_{d_n}(h_i)] \in \mathbb R^{ d_n}$ that consists of  candidate functions  $\xi_\ell\colon \mathbb R^{d} \to \mathbb R,$ for $\ell \in [d_n]$. There is tremendous freedom in the choice of dictionary which, in turn, determines the \emph{representability} or \emph{expressiveness} of our model. A key attribute of our method will be to choose dictionary elements which are \emph{nonlinear}; if only linear dictionary elements are chosen, the DDR embedding residual error can only be as good as the PCA embedding. For example, in the implementation discussed in Section \ref{sec:experiments}, we utilize   multivariate polynomials with degree $\leq 3$ as dictionary elements. However, with the introduction of nonlinear dictionary elements, we must consider whether, for each data point $x_i$, there exists a unique solution to the governing ODEs \eqref{eq:ODE} on the time interval $[0,T]$. The following theorem recalls sufficient conditions to guarantee the existence and uniqueness of a solution to  \eqref{eq:ODE} depending on a choice of a dictionary. Its proof relies on a standard existence/uniqueness argument in the theory of ordinary differential equations (see, \eg, \cite[Theorem 3.2]{ODE}).

\begin{thm}\label{thm:ODE_existence}
Let $K \subset \mathbb R^d$ be a compact set that contains every data point $x_i$, $i \in [N]$ and define 
$$
 r_K := \sup_{y \in K}\|y-x_i\|_2,\qquad \forall i \in [N].
$$
Suppose every $\xi_\ell \colon \mathbb R^d \to \mathbb R$ is Lipschitz continuous on $K$ with  Lipschitz constant $\mathcal L$, \ie,~for every $h_1,h_2 \in K$, 
\[
|\xi_\ell(h_1) - \xi_\ell(h_2)| \leq \mathcal L \|h_1-h_2\|_2, \qquad \forall \ell \in [d_n].
\]
Furthermore, assume $\max_{x\in K} |\xi_\ell(x)| \leq M$. 
Then the initial value problem  \eqref{eq:ODE} with the right hand side \eqref{eq:rhs} has the unique solution on interval $[-s,s]$, where $s = \frac{1}{\sqrt{d_n} \|\beta\|_2} \cdot \min\{\frac{r_K}{M}, \frac{1}{\mathcal L }\}$.
\end{thm}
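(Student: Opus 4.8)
The plan is to reduce the statement to the classical Picard--Lindel\"of existence/uniqueness theorem cited in the text, so that the entire content of the proof is the computation of two constants for the specific vector field $\Phi(\cdot;\beta) = \beta\,\Xi(\cdot)$: a Lipschitz constant and a uniform bound on a ball around the initial datum. Concretely, I would fix $i \in [N]$, work on the closed ball $\bar B(x_i, r_K)$ (which, by the definition of $r_K$, contains $K$ and is thus the natural region on which the dictionary bounds are available), and set up the Picard operator $\mathcal T[h](t) = x_i + \int_0^t \Phi(h(\tau);\beta)\,d\tau$ on the space of continuous curves into that ball. The two ingredients that control this operator are exactly the Lipschitz constant $L_\Phi$ and the sup-bound $\mathcal M_\Phi$ of $\Phi$ on the ball.

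First I would propagate the componentwise Lipschitz hypothesis through the factorization $\Phi = \beta\,\Xi$. For $h_1, h_2$ in the region, the definition of the operator norm gives $\|\Phi(h_1;\beta) - \Phi(h_2;\beta)\|_2 \le \|\beta\|_2\,\|\Xi(h_1) - \Xi(h_2)\|_2$, and aggregating the $d_n$ scalar estimates $|\xi_\ell(h_1) - \xi_\ell(h_2)| \le \mathcal L\,\|h_1 - h_2\|_2$ in $\ell^2$ yields $\|\Xi(h_1) - \Xi(h_2)\|_2 \le \sqrt{d_n}\,\mathcal L\,\|h_1 - h_2\|_2$. Hence $\Phi(\cdot;\beta)$ is Lipschitz with constant $L_\Phi = \sqrt{d_n}\,\|\beta\|_2\,\mathcal L$.

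An identical $\ell^2$-aggregation of the pointwise bounds $|\xi_\ell(x)| \le M$ gives $\|\Xi(x)\|_2 \le \sqrt{d_n}\,M$ and therefore the uniform bound $\|\Phi(x;\beta)\|_2 \le \sqrt{d_n}\,\|\beta\|_2\,M =: \mathcal M_\Phi$ on the ball. With these two constants in hand, the cited theorem produces a unique solution on the symmetric interval $|t| \le \min\{\, r_K/\mathcal M_\Phi,\ 1/L_\Phi \,\}$; substituting the expressions for $\mathcal M_\Phi$ and $L_\Phi$ and factoring out $(\sqrt{d_n}\,\|\beta\|_2)^{-1}$ collapses this to the stated $s = \frac{1}{\sqrt{d_n}\,\|\beta\|_2}\min\{\, r_K/M,\ 1/\mathcal L \,\}$.

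I expect the only real obstacle to be bookkeeping rather than anything conceptual: one must check that the Picard iterates remain inside the ball $\bar B(x_i, r_K)$ on which the dictionary estimates hold, which is precisely what the first term $r_K/\mathcal M_\Phi$ in the minimum encodes (it guarantees $\|\mathcal T[h](t) - x_i\|_2 \le |t|\,\mathcal M_\Phi \le r_K$), while the second term $1/L_\Phi$ guarantees that $\mathcal T$ is a contraction. Care is also needed to carry the scalar constants $M$ and $\mathcal L$ of the individual dictionary functions correctly through the multiplication by $\beta$ and the $\ell^2$-aggregation over the $d_n$ components, since this is exactly what produces the $\sqrt{d_n}\,\|\beta\|_2$ prefactor appearing in $s$.
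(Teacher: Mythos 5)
Your proposal is correct and follows essentially the same route as the paper: the paper's ``proof'' is precisely a citation of a standard Picard--Lindel\"of theorem, and your computation of the two constants --- the Lipschitz bound $\|\Phi(h_1;\beta)-\Phi(h_2;\beta)\|_2\le \sqrt{d_n}\,\|\beta\|_2\,\mathcal L\,\|h_1-h_2\|_2$ and the sup bound $\|\Phi(x;\beta)\|_2\le \sqrt{d_n}\,\|\beta\|_2\,M$ --- is exactly the bookkeeping needed to turn that citation into the stated $s=\frac{1}{\sqrt{d_n}\|\beta\|_2}\min\{r_K/M,\,1/\mathcal L\}$. One caveat, inherited from the paper's statement rather than introduced by you: the hypotheses place the dictionary bounds on $K$ while the Picard iteration needs them on the ball $\bar B(x_i,r_K)$, and since that ball contains $K$ (not the reverse), your parenthetical claim that the bounds are \emph{available} on the ball has the containment backwards --- a fully rigorous version would assume the Lipschitz and sup bounds on $\bar B(x_i,r_K)$ itself (or restrict the iterates to $K$), a looseness the paper glosses over as well.
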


Note that  \cref{thm:ODE_existence} only guarantees the existence/uniqueness of the initial value problem \eqref{eq:ODE} on a time interval $[-s, s]$, whereas, for our method, we require the existence/uniqueness on the time interval $[0,T]$. 
Note that we could accomplish this by constraining $\|\beta\|_2$ to be sufficiently small. However, this is too restrictive and we alternatively define the set     
\begin{equation}
\label{e:B0}
\mathcal B_0 := \{\beta\in \mathbb R^{d\times d_n}\colon \textrm{ for each $i \in [N]$, the solution to \eqref{eq:ODE} uniquely exists on $[0,T]$} \}. 
\end{equation}
Note that  $\mathcal B_0$ contains a ball around the origin (by \cref{thm:ODE_existence}) and is star-shaped with respect to the origin. Later, it will be useful (for compactness) to additionally assume that there exists a constant $b>0$ such that $\| \beta \|_2 \leq b$, so we define the subset $\mathcal B_1 = \mathcal B_1(b)$
\begin{equation}
\label{e:B1}
\mathcal B_1 := \{\beta\in \mathcal B_0 \colon \| \beta \|_2 \leq b \}. 
\end{equation}

We collect the assumptions on the data, dictionary functions, and parameters $\beta$, $Q$ in the following.
\begin{assum} 
\label{assumptions}
\ \ 
\begin{enumerate}
\item The $N$ samples of data $x_i \in \mathbb R^d$ lie in the compact set $K \subset \mathbb R^d$.
\item The dictionary $\Xi$ consists of fixed $d_n$ candidate functions $\xi_\ell\colon \mathbb R^d \to \mathbb R$, which are Lipschitz continuous on $K$ with Lipschitz constant $\mathcal{L}$.
\item For some fixed (large) $b>0$, we assume $\beta \in \mathcal B_1(b)$, defined in \eqref{e:B1}. 
\item For fixed $k \ll d$, $Q \in \mathcal O_k$, defined in \eqref{e:Ok}
\end{enumerate}
\end{assum}

\subsubsection*{Loss function}
To train the model and obtain the DDR embedding, we introduce the  loss function 
\begin{align}\label{eq:objective_regular}
J(\beta,Q) &=  \frac{1}{N}\sum_{i \in [N]} ~ \underbrace{\|h_i(T)-Q^\ast Qh_i(T)\|^2}_{\text{$i$-th sample residual error}} ~+ ~\mu \cdot \underbrace{R(h_i;\beta)}_{\text{regularization}}.
\end{align}
Here, $\beta \in \mathcal B_1$ and $Q \in \mathcal O_k$ are model parameters and $\mu$ is a model hyperparameter that gives a trade-off between the  two terms in the objective \eqref{eq:objective_regular}. 
The first term is seen to be the mean squared residual error; it encourages the solutions to the ODE \eqref{eq:ODE} at time $T$ to lie in a lower dimension subspace. 
The second term is a regularization term, which we will discuss next.

We introduce a regularization term in the objective \eqref{eq:objective_regular} since there are many flows $\Phi=\beta \Xi$ which give the same final-time hidden variables $\{h_i(T)\}_{i \in [N]}$. We would like to choose a regularization term so that the resulting vector field $x \mapsto \beta \Xi(x)$ has very regular, smooth trajectories. We choose the regularization function   
\begin{equation}\label{eq:regularization}
    R(h;\beta) = \frac{1}{N} \sum_{i=1}^N \int_0^T  \|\beta\Xi(h_i(t))\|^2  \ dt. 
\end{equation}
Since we can trivially rewrite 
$R(h;\beta) = \frac{1}{N} \sum_{i=1}^N \int_0^T  \| \dot h_i(t) \|^2  dt$, 
we can interpret $R(h;\beta)$ as the \emph{mean total kinetic energy} of the trajectories. 
We can also interpret the regularization $R(h;\beta)$ in terms of the Lagrangian perspective for the  Benamou-Brenier formulation of the Wasserstein metric (see \cref{subsec:OT}). Namely, the regularization term $R(h;\beta)$ is the \emph{action} for the vector field which advects the time-parameterized probability measure $\mu_t(x) = \frac{1}{N} \sum_{i=1}^N \delta(x-h_i(t))$, $t \in [0,T]$, 
$$
R(h;\beta) = \int_0^T \int_{\mathbb R^d} \|\beta\Xi(x(t))\|^2 ~d\mu_t(x) ~ dt. 
$$
Thus, as in the definition of the 2-Wasserstein distance, the regularization term penalizes the deviation of the trajectories from the constant-speed linear path between initial and final positions. Similar ideas were used in \cite{Regularity} where the speed up in training is emphasized resulting from better-conditioned ODEs.

\subsubsection*{Optimization formulation} 
To train the DDR model, we formulate the ODE-constrained optimization problem, 
\begin{subequations}
\label{eq:optimization}
{\small
\begin{align}
J^\star = \argmin_{\beta,Q} \ &  J(\beta, Q),  \quad J(\beta, Q):= \frac{1}{N}\sum_{i \in [N]}  \left(\|h_i(T)-Q^\ast Qh_i(T)\|^2 + \mu\int_0^T \| \beta\Xi(h_i(t))\|^2~dt \right) 
\label{eq:obj_fun}
\\ \textrm{s.t.} \ & 
\beta \in \mathcal B_1(b), \quad 
    Q \in \mathcal O_k \\
    &  \frac{d }{dt}h_i(t) =\beta\Xi(h_i(t)), \quad h_i(0) = x_i,  
\label{eq:consts_ODE} 
\end{align}}
\end{subequations}
We will show that this 
ODE-constrained optimization problem
is well-posed (\cref{subsec:existence}) and derive a gradient-based optimization method for solving it in \cref{subsec:adjoint}.

\begin{rem} \label{rem:BetaZero}
It is be useful to consider the problem when $\beta = 0$. In this case, the ODE \eqref{eq:consts_ODE} is trivial and $h_i(T) = h_i(0) = x_i$. We obtain
$$
J(\beta = 0, Q) = \frac{1}{N} \| X - Q^\ast  Q X\|_F^2,
$$
where 
$X = [ x_1 \mid \cdots \mid x_N] \in \mathbb R^{d\times N}$. Assuming the  singular value decomposition, 
$X = U \Sigma V^\ast $, where
the singular values are arranged in decreasing order, \ie, $\sigma_1 \geq \cdots \geq \sigma_d$, the  Eckart–Young theorem gives that for all $Q \in \mathcal O_k$,
$$
J(\beta = 0, Q) \geq \sum_{i = k+1}^d \sigma^2_i(X) \equiv J_0,
$$
with equality attained by $Q = U_k^\ast$ where $U_k \in \mathbb R^{d\times k}$ are the first $k$ columns of $U$. It follows that $J^\star \leq J_0$. We interpret this computation as follows. By allowing $\beta$ to vary, the method finds a nonlinear transformation $x_i \mapsto h_i(T)$ so that the PCA of the transformed data has a smaller objective value than the original PCA objective value. 
\end{rem}

\subsection{A DDR-based generative model}
\label{s:GenModel}
An interesting property of the DDR framework is that an (approximate) decoding can be obtained by the time-reversal of the learned dynamical system and thus, the model can be extended as a generative model \cite{VAE}.
More precisely, the encoder  $\mathcal{E} \colon \mathbb R^d \to \mathbb R^k$ can be written $ y =  \mathcal{E}(x) =  Q h(T)$, where $h(T)$ is the the solution to  \eqref{eq:ODE} at time $t=T$ with initial condition $x$. \emph{Assuming zero residual training error}, we have that $h(T) = Q^\ast y$. In this case, the decoder, $\mathcal D\colon \mathbb R^k \to \mathbb R^d$, is exactly obtained by solving the time-reversed dynamical system, 
\begin{subequations} 
\label{e:TimeReverseDS}
\begin{align}
\frac{d}{dt} h(t) = \beta \Xi(h(t)) \\
h(T) = Q^\ast y
\end{align}
\end{subequations}
backwards in time from $t=T$ to $t=0$ and setting $\mathcal{D}(y) = h(0)$. In the case of zero training error, we have $\mathcal{D} \circ \mathcal{E}(x_i) =x_i$ for all training data $x_i \in \mathbb R^d$. 
In general, we thus define the \emph{decoder} $\mathcal D\colon \mathbb R^k \to \mathbb R^d$ to be $\mathcal{D}(y) = h(0)$, where $h(t)$, $t\in[0,T]$ satisfies \eqref{e:TimeReverseDS} with final condition given by $h(T) = Q^\ast y$. 

Moreover, if the data has distribution $\rho_0$, the low-dimensional representation has distribution $\rho_T = (\mathcal{E})_\# \rho_0$. Thus, using a density estimate $\nu$ of $\rho_T$ (\eg, kernel density estimate), the decoder could be used to generate new data via $(\mathcal{D})_\# \nu$. Finally, if we assume that the  distribution $\rho_0$ is supported on a low-dimensional manifold embedded in $\mathbb R^d$, then $\mathcal D\colon \mathbb R^k \to \mathbb R^d$ is a parameterization of the manifold, so the DDR framework can be used in the context of manifold learning.

\subsection{Existence of a minimizer} \label{subsec:existence}
In this section we show that the constrained optimization problem in \eqref{eq:optimization} is well-defined in the setting of  \cref{assumptions}. We will employ the direct method in the calculus of variations.

The following Lemma shows that for $\beta \in \mathcal B_0$, if the kinetic energy of the solution is bounded then so is the solution. 
\begin{lem}\label{lem:bound_h}
Let $\beta \in \mathcal B_0$.  If the solution $h(t)$ to the Cauchy problem 
\begin{align*}
\dot h(t)& =\beta\Xi(h(t)) \\
 h(0) &= x. 
 \end{align*} 
satisfies $\int_0^T \| \dot h(\tau) \|^2 d \tau \leq C$ for some constant $C$, then  there exists an $H > 0$ such that
$$\| h(t) \| \leq H, \qquad \qquad  \forall t \in [0,T].
$$
\end{lem}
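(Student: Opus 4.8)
The plan is to represent $h$ as the integral of its velocity and then convert the $L^2$ bound on $\dot h$ into a pointwise bound on $h$ via the Cauchy--Schwarz inequality. Since $\beta \in \mathcal B_0$, the Cauchy problem admits a (unique) solution on all of $[0,T]$, and the hypothesis $\int_0^T \|\dot h(\tau)\|^2\,d\tau \le C$ in particular makes $\dot h$ integrable on $[0,T]$; hence $h$ is absolutely continuous and the fundamental theorem of calculus gives
$$
h(t) = x + \int_0^t \dot h(\tau)\,d\tau, \qquad t \in [0,T].
$$

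From here I would estimate directly. Applying the triangle inequality and enlarging the integration interval to all of $[0,T]$,
$$
\|h(t)\| \le \|x\| + \int_0^t \|\dot h(\tau)\|\,d\tau \le \|x\| + \int_0^T \|\dot h(\tau)\|\,d\tau .
$$
The remaining integral is the $L^1$-norm of $\|\dot h\|$ on $[0,T]$, which Cauchy--Schwarz bounds by its $L^2$-norm:
$$
\int_0^T \|\dot h(\tau)\|\,d\tau \le \Big(\int_0^T 1\,d\tau\Big)^{1/2}\Big(\int_0^T \|\dot h(\tau)\|^2\,d\tau\Big)^{1/2} \le \sqrt{T}\,\sqrt{C}.
$$
Combining the two displays yields the uniform bound $\|h(t)\| \le \|x\| + \sqrt{TC}$ for every $t \in [0,T]$, so the claim holds with the explicit choice $H := \|x\| + \sqrt{TC}$.

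I do not anticipate a genuine obstacle, since the core of the argument is a single application of Cauchy--Schwarz once the integral representation is in hand. The only point warranting care is justifying that representation: this is where the hypothesis $\beta \in \mathcal B_0$ is used, guaranteeing a solution on the whole interval $[0,T]$ rather than only the short interval $[-s,s]$ furnished by \cref{thm:ODE_existence}, while the finite-energy assumption ensures $\dot h \in L^1([0,T])$ so that the fundamental theorem of calculus applies. Note that the resulting $H$ depends only on the initial datum $x$, the terminal time $T$, and the energy bound $C$; this quantitative, energy-controlled form is precisely what is needed to feed into the subsequent compactness/direct-method argument for existence of a minimizer of \eqref{eq:optimization}.
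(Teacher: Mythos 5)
Your proof is correct and takes essentially the same route as the paper's: represent $h(t)-x=\int_0^t \dot h(\tau)\,d\tau$ and convert the kinetic-energy bound into a pointwise bound via Cauchy--Schwarz. In fact your version is slightly more careful than the paper's, whose chain of inequalities asserts $\bigl\|\int_0^t \dot h(\tau)\,d\tau\bigr\|^2 \leq \int_0^t \|\dot h(\tau)\|^2\,d\tau$ without the Cauchy--Schwarz factor $t$ (valid only when $t\leq 1$), whereas your bound $H=\|x\|+\sqrt{TC}$ carries the correct $\sqrt{T}$ dependence.
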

\begin{proof} We compute
$$
C\geq \int_0^T \| \dot h(\tau) \|^2 d \tau 
\geq \int_0^t \| \dot h(\tau) \|^2 d \tau 
\geq \left\| \int_0^t 
\dot h(\tau) d \tau \right\|^2 
= \| h(t)- h(0) \|^2 
$$
This implies that 
$  \| h(t) \| - \|  h(0) \|  \leq 
\| h(t) - h(0) \| \leq \sqrt{C}$ so that 
$\|h(t) \| \leq |K| + \sqrt{C} =: H$. 
\end{proof}

Recalling \cref{rem:BetaZero}, we need only consider $(\beta,Q)$ such that $J(\beta,Q) \leq J_0$. Observing that 
$$
J(\beta,Q) \geq 
\frac{\mu }{N}\sum_{i \in [N]}  
\int_0^T \| \dot h_i(\tau)\|^2~d\tau,
$$ 
\cref{lem:bound_h} shows that there exists an $H>0$ such that we need only consider  $(\beta,Q)$ such that the corresponding hidden solutions, $h_i(t)$ for $i\in [N]$, are bounded by $H$, \ie, $\|h_i(t)\| \leq H$. For this fixed $H>0$, we define $\mathcal B_2 = \mathcal B_2(H)$ by
\[\mathcal B_2  := \{\beta\in \mathcal B_1(b) \colon   \text{  the solutions $h_i(t), \ i \in [N]$ to \eqref{eq:ODE}  are bounded with } \|h_i(t)\| \leq H, \ t \in [0,T]  \}.
\]
We next prove the Lipschitz continuity of the solution $h$ to \eqref{eq:ODE} at time $t$ with respect to the  parameter $\beta$. 
\begin{lem}\label{lem:h}
For $\beta$ and $\tilde{\beta}$ in $\mathcal{B}_2$, denote the solutions to  \eqref{eq:ODE} as $h$ and $\tilde{h}$ respectively. Then we have
\begin{equation}\label{ineq:h}
    \|h(t) - \tilde{h}(t) \| \leq C\|\beta-\tilde{\beta}\|_2, \qquad \forall t \in [0,T]
\end{equation}
for some constant $C>0$.
\end{lem}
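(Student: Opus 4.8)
The plan is to establish this as a standard continuous-dependence-on-parameters estimate, with the whole argument resting on \textbf{Gr\"onwall's inequality}. The essential observation is that $h$ and $\tilde h$ share the same initial condition $h(0)=\tilde h(0)=x$, so the only source of discrepancy is the difference in the vector fields $\beta\Xi(\cdot)$ and $\tilde\beta\Xi(\cdot)$. I would begin by rewriting both Cauchy problems in integral form and subtracting, so that
\[
h(t)-\tilde h(t)=\int_0^t\Bigl(\beta\,\Xi(h(s))-\tilde\beta\,\Xi(\tilde h(s))\Bigr)\,ds .
\]
The key algebraic step is to add and subtract $\beta\,\Xi(\tilde h(s))$ inside the integrand, splitting it as
\[
\beta\,\Xi(h(s))-\tilde\beta\,\Xi(\tilde h(s))
=\beta\bigl(\Xi(h(s))-\Xi(\tilde h(s))\bigr)+(\beta-\tilde\beta)\,\Xi(\tilde h(s)),
\]
which isolates a term controlled by the \emph{Lipschitz regularity} of the dictionary (multiplying $\|h-\tilde h\|$) from a term controlled by the \emph{parameter difference} $\|\beta-\tilde\beta\|_2$.

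Next I would bound each piece. For the first piece, $\|\beta(\Xi(h)-\Xi(\tilde h))\|\le\|\beta\|_2\,\|\Xi(h)-\Xi(\tilde h)\|$, and since every $\xi_\ell$ is Lipschitz, the componentwise estimate aggregates to $\|\Xi(h)-\Xi(\tilde h)\|\le\sqrt{d_n}\,\mathcal L\,\|h-\tilde h\|$; using $\|\beta\|_2\le b$ from the definition of $\mathcal B_1$ gives a constant coefficient $a:=b\sqrt{d_n}\,\mathcal L$ in front of $\|h-\tilde h\|$. For the second piece, $\|(\beta-\tilde\beta)\Xi(\tilde h)\|\le\|\beta-\tilde\beta\|_2\,\|\Xi(\tilde h)\|\le\sqrt{d_n}\,M\,\|\beta-\tilde\beta\|_2$, using the uniform bound $|\xi_\ell|\le M$ on the dictionary. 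Assembling these into the integral inequality and abbreviating $c:=\sqrt{d_n}\,M\,\|\beta-\tilde\beta\|_2$, I obtain
\[
\|h(t)-\tilde h(t)\|\le cT+a\int_0^t\|h(s)-\tilde h(s)\|\,ds .
\]
Gr\"onwall's inequality then yields $\|h(t)-\tilde h(t)\|\le cT e^{aT}$, so the claim holds with
$C=\sqrt{d_n}\,M\,T\,e^{b\sqrt{d_n}\,\mathcal L\,T}$, uniformly in $t\in[0,T]$.

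The step I expect to require the most care is not the Gr\"onwall machinery but ensuring the Lipschitz constant $\mathcal L$ and the sup-bound $M$ are valid \emph{along the trajectories}, which may wander outside the original compact set $K$. This is precisely where the restriction to $\mathcal B_2=\mathcal B_2(H)$ is needed: by construction, for $\beta,\tilde\beta\in\mathcal B_2$ the solutions $h$ and $\tilde h$ remain in the closed ball $\overline B(0,H)$ for all $t\in[0,T]$ (a constraint that is consistent, via \cref{lem:bound_h}, with the bounded-kinetic-energy regime we actually operate in). Since the dictionary functions $\xi_\ell$ are continuous, hence locally Lipschitz and bounded on the compact set $\overline B(0,H)$, one obtains finite constants $\mathcal L$ and $M$ valid on this larger set; I would state at the outset that $\mathcal L,M$ are taken with respect to $\overline B(0,H)$ so that every inequality above is legitimate. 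Once this domain issue is settled, the remainder of the argument is routine.
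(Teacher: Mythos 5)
Your proof is correct and follows essentially the same route as the paper's: write both solutions in integral form, split the difference of the vector fields into a parameter-difference term plus a Lipschitz term (you add and subtract $\beta\,\Xi(\tilde h)$ where the paper uses $\tilde\beta\,\Xi(h)$ --- an immaterial symmetry), bound the dictionary along the $\mathcal{B}_2$-bounded trajectories, and conclude via Gr\"onwall with a constant of the same form $C = \sqrt{d_n}\,M\,T\,e^{\sqrt{d_n}\,\mathcal{L}\,b\,T}$. One small caution: your parenthetical claim that continuity implies local Lipschitzness is false in general, so (exactly as the paper implicitly does) you should simply require that the dictionary elements are Lipschitz and bounded on the larger compact set $\overline{B}(0,H)$ containing the trajectories, which holds, e.g., for the polynomial dictionaries used in the paper.
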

\begin{proof}
Since $h$ and $\tilde{h}$  are solution to the ODEs \eqref{eq:consts_ODE}, we have
\[
\frac{d}{dt} \left( h- \tilde{h} \right)(t) = \beta \Xi(h) - \tilde{\beta} \Xi(\tilde{h}),
\]
with the initial condition $h(0)-\tilde{h}(0) = 0. $
Then we estimate 
 \begin{align*}
    \|h(t)-\tilde{h}(t)\| &\leq \int_0^t\|\beta \Xi(h(\tau)) - \tilde{\beta} \Xi(\tilde{h}(\tau))\|  d\tau \\
%    & =  \int_0^t\|\beta \Xi(h(\tau)) -\tilde{\beta} \Xi(h(\tau)) + \tilde{\beta} \Xi(h(\tau))-\tilde{\beta} \Xi(\tilde{h}(\tau))\|  d\tau \\
  &\leq \int_0^t \|\beta-\tilde{\beta}\|_2 \|\Xi(h(\tau)\| ~d\tau + \int_0^t \|\tilde{\beta}\|_2 \|\Xi(h(\tau)) -\Xi(\tilde{h}(\tau))\|~d\tau \\
    &\leq \int_0^t \|\beta-\tilde{\beta}\|_2 \|\Xi(h(\tau)\| ~d\tau + \int_0^t \sqrt{d_n}\mathcal{L}M\|h(\tau) -\tilde{h}(\tau)\|~d\tau.
\end{align*}
By Gronwall's inequality, $
 \|h(t)-\tilde{h}(t)\| \leq \|\beta-\tilde{\beta}\|_2  \int_0^t \|\Xi(h(\tau))\|~d\tau e^{\sqrt{d_n}\mathcal{L}Mt}$. 
Since $\beta\in \mathcal{B}_2$, $h(t)\leq H$, which implies that the norm of the finite number of dictionary terms at $h(t)$  is also bounded, \ie,~ $\|\Xi(h(t))\| \leq \tilde{H}$, for some constant $\tilde{H}>0$ over interval $t\in [0,T]$. Therefore, \eqref{ineq:h} holds with $C=T\tilde{H} e^{\sqrt{d_n}\mathcal{L}M T}$. 
\end{proof}

Using \cref{lem:h}, we prove the following theorems establishing continuity and compactness of the objective function over  $\mathcal{B}_2\times \mathcal{O}_k$. 

\begin{thm}\label{thm:cont}
The objective function  $J$ is $2-$Hölder continuous over $\mathcal{B}_2$ and $\mathcal{O}_k$ respectively, \ie ~ for all $\beta, \tilde{\beta}\in \mathcal{B}_2$ and $Q,P \in \mathcal{O}_k$,
\begin{equation}\label{ineq:cont_Q}
    |J(\beta,Q)-J(\tilde{\beta},P)| \leq C_1 \|Q-P\|^2 + C_2 \|\beta-\tilde{\beta}\|^2,
\end{equation}
for some positive constants $C_1$ and $C_2$.
\end{thm}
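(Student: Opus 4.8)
The plan is to bound the two constituents of $J$ separately and, crucially, to exploit that \emph{both} terms of $J$ are themselves squared quantities, so that the relevant increments can be organized into genuinely quadratic pieces. I would first split the change in $J$ by the triangle inequality into a pure-$Q$ variation and a pure-$\beta$ variation,
\[
|J(\beta,Q)-J(\tilde\beta,P)| \le |J(\beta,Q)-J(\beta,P)| + |J(\beta,P)-J(\tilde\beta,P)|,
\]
and treat each separately, noting that on $\mathcal B_2$ the uniform bounds $\|h_i(t)\|\le H$ and $\|\Xi(h_i(t))\|\le \tilde H$ (used already in \cref{lem:h}) make all constants finite.

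For the $Q$-variation, $\beta$ is fixed, so the hidden trajectories $h_i$ and hence the regularization term are unchanged and cancel exactly; only the residual varies. Since $QQ^\ast=I_k$, the matrix $Q^\ast Q$ is an orthogonal projection, so $\|h-Q^\ast Q h\|^2=\|h\|^2-\|Qh\|^2$, and the $Q$-dependence enters $J$ only through $-\tfrac1N\sum_i\|Qh_i(T)\|^2$. For a fixed vector $u=h_i(T)$ I would use the exact identity
\[
\|Qu\|^2-\|Pu\|^2 = \|(Q-P)u\|^2 + 2\langle (Q-P)u,\ Pu\rangle,
\]
in which the first summand is manifestly $O(\|Q-P\|^2)$: since $\|u\|=\|h_i(T)\|\le H$ on $\mathcal B_2$, it is bounded by $H^2\|Q-P\|^2$, which supplies the $C_1\|Q-P\|^2$ contribution.

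For the $\beta$-variation I would hold $Q=P$ fixed and invoke \cref{lem:h}, namely $\|h(t)-\tilde h(t)\|\le C\|\beta-\tilde\beta\|$. Applying the analogous expansion $\|u\|^2-\|v\|^2=\|u-v\|^2+2\langle v,u-v\rangle$ to $u=(I-Q^\ast Q)h(T)$, $v=(I-Q^\ast Q)\tilde h(T)$ in the residual, and to $u=\beta\Xi(h)$, $v=\tilde\beta\Xi(\tilde h)$ in the regularization integrand, the $\|u-v\|^2$ pieces become $O(\|\beta-\tilde\beta\|^2)$ once \cref{lem:h} and the Lipschitz/boundedness estimates on $\Xi$ are inserted (for the regularization term, $\|\beta\Xi(h)-\tilde\beta\Xi(\tilde h)\|\le \|\beta-\tilde\beta\|\,\tilde H+b\sqrt{d_n}\mathcal L\|h-\tilde h\|$, then square and integrate over $[0,T]$). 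These give the $C_2\|\beta-\tilde\beta\|^2$ contribution.

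The \emph{main obstacle}, and the genuine crux of the theorem, is the quadratic exponent itself. Each expansion above produces, besides the clean quadratic term, a first-order cross term, $2\langle (Q-P)u,\ Pu\rangle$ in the $Q$-variation and $2\langle v,\ u-v\rangle$ in the $\beta$-variation, which a priori scale only like $\|Q-P\|$ and $\|\beta-\tilde\beta\|$. The heart of the argument is to control these cross terms by the corresponding \emph{squared} increments rather than settling for a merely Lipschitz bound; I would attempt this by using the projection/orthogonality structure of $Q^\ast Q$ and the specific algebraic form of the increments so that each inner product is dominated by the quadratic term it accompanies. Once these cross terms are absorbed, reassembling the two variations through the triangle inequality yields the stated bound with $C_1,C_2$ depending on $H,\tilde H,T,\mu,b,\mathcal L,d_n$. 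I expect this absorption of the first-order cross terms to be the delicate step on which the $2$-Hölder (rather than Lipschitz) conclusion hinges.
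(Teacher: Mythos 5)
Your setup—splitting into a pure-$Q$ and a pure-$\beta$ variation, using $\|(I-Q^\ast Q)h\|^2=\|h\|^2-\|Qh\|^2$, the uniform bounds $\|h_i(t)\|\le H$, $\|\Xi(h_i(t))\|\le\tilde H$, and \cref{lem:h}—matches the paper's proof. But the step you yourself flag as the crux, absorbing the first-order cross terms $2\langle (Q-P)u,\,Pu\rangle$ and $2\langle v,\,u-v\rangle$ into the quadratic pieces, is a genuine gap, and in fact it cannot be closed: those terms are honestly first order. Fix $\beta$ and a data point with $u=h_i(T)\neq 0$, and move $Q$ along a smooth curve in the Stiefel manifold $\mathcal O_k$ with tangent $E$ at $P$; then $\frac{d}{dt}\|Qu\|^2=2\langle Eu,\,Pu\rangle$, which is generically nonzero (e.g.\ $k=1$, $P=e_1^\ast$, $E=v^\ast$ with $v\perp e_1$ and $u$ having components along both $e_1$ and $v$), so $|J(\beta,Q)-J(\beta,P)|$ scales like $\|Q-P\|$, not $\|Q-P\|^2$, for nearby $Q,P$. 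More structurally, a bound of the form \eqref{ineq:cont_Q} on the connected set $\mathcal O_k$ would force $J(\beta,\cdot)$ to be constant: chop a path of length $L$ into $n$ arcs and telescope to get an increment $O(L^2/n)\to 0$. Since $J$ is not constant in $Q$, no algebraic use of the projection structure can rescue the absorption step.

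It is worth knowing that the paper's own proof does not do better at this exact point: it passes from $\bigl|\,\|a\|^2-\|b\|^2\,\bigr|$ to $\|a-b\|^2$ (with $a=(I-Q^\ast Q)h_i(T)$, $b=(I-P^\ast P)h_i(T)$, and analogously in the $\beta$-variation), i.e.\ it silently discards precisely the cross terms you identified; the inequality $|\,\|a\|^2-\|b\|^2\,|\le\|a-b\|^2$ is false in general (take $a=2$, $b=1$). So your instinct that the quadratic exponent is the delicate issue is more careful than the paper's argument. The statement that is both true and sufficient for the direct-method existence proof in \cref{thm:ex_opt} is Lipschitz continuity: using $|\,\|a\|^2-\|b\|^2\,|\le(\|a\|+\|b\|)\,\|a-b\|$ together with the uniform bounds $H$, $\tilde H$, the estimate \eqref{ineq:QP}, and \cref{lem:h}, your two expansions immediately give $|J(\beta,Q)-J(\tilde\beta,P)|\le C_1\|Q-P\|+C_2\|\beta-\tilde\beta\|$ with constants depending on $H,\tilde H,T,\mu,b,\mathcal L,d_n$. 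If you replace the attempted absorption by this first-order estimate, your argument becomes a complete, correct proof of the continuity the existence theorem actually requires.
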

\begin{proof}
For any $Q,P\in \mathcal{O}_k$, 
\begin{equation}\label{ineq:QP}
    \|Q^\ast Q -P^\ast P\| \leq \|Q^\ast Q-Q^\ast P\| + \|Q^\ast P-P^\ast P\| \leq (\|Q\|+\|P\|)\|Q-P\| = 2\|Q-P\|, 
\end{equation}
where orthonormality of $Q$ implies $\|Q\| = \sqrt{\lambda_{max}(QQ^\ast )} = 1$.  Then for fixed $\beta\in \mathcal{B}_2,$
{\small
\begin{align*}
    |J(\beta,Q)-J(\beta,P)| &= \frac{1}{N}\sum_{i \in [N]}\left|\|(I-Q^\ast Q)h_i(T)\|^2 - \|(I-P^\ast P)h_i(T)\|^2
    \right| \\
    & \leq \frac{1}{N}\sum_{i \in [N]}\|(I-Q^\ast Q)h_i(T) - (I-P^\ast P)h_i(T)\|^2 \leq \frac{1}{N}\sum_{i \in [N]}\|h_i(T)\|^2 \|Q^\ast Q-P^\ast P\|^2\\
    & \leq C_1\|Q-P\|^2, 
\end{align*}}
where $C_1 = 2H^2$ and  the last inequality is obtained by \eqref{ineq:QP}. Therefore, the objective is Hölder continuous w.r.t $Q$.
Moreover, for $\beta, \tilde{\beta} \in \mathcal{B}_2,$
{\small
\begin{align*}
    |J(\beta,P)-J(\tilde{\beta},P)| & \leq \frac{1}{N}\sum_{i\in [N]}\left| \|(I-Q^\ast Q)h_i(T)\|^2 - \|(I-Q^\ast Q)\tilde{h}_i(T)\|^2\right|  + \mu \int_0^T \left| \|\beta \Xi(h_i)\|^2 - \|\tilde{\beta} \Xi(\tilde{h_i})\|^2  \right|~d\tau\\
    &\leq \frac{1}{N}\sum_{i\in [N]}\|(I-Q^\ast Q)(h_i(T) -\tilde{h}_i(T))\|^2 + \mu \int_0^T \|\beta \Xi(h_i)-\tilde{\beta} \Xi(\tilde{h_i})\|^2  ~d\tau\\
     & \leq \frac{1}{N}\sum_{i\in [N]} \|I-Q^\ast Q\|^2\|h_i -\tilde{h}_i\|^2 + \mu \int_0^T \|\beta-\tilde{\beta}\|^2\|\Xi(h_i(\tau)\|^2 + \|\tilde{\beta}\|^2\|\Xi(h_i)-\Xi(\tilde{h}_i)\|^2~d\tau\\
    &\leq \frac{1}{N}\sum_{i\in [N]} \|h_i -\tilde{h}_i\|^2 +\mu \int_0^T \|\beta-\tilde{\beta}\|^2\tilde{H}^2 + d_n\mathcal{L}^2M^2\|h_i-\tilde{h}_i\|^2~d\tau\\
    &\leq (1+\mu T d_n\mathcal{L}^2M^2)\|h-\tilde{h}\|^2 + \mu T\tilde{H}^2 \|\beta-\tilde{\beta}\|^2.
\end{align*}}
Using the inequality in \cref{lem:h}, 
\[
 |J(\beta,P)-J(\tilde{\beta},P)| \leq  C_2\|\beta-\tilde{\beta}\|^2,
\]
where  $C_2 =C^2 + \mu T( \tilde{H^2} + d_n \mathcal{L}^2C^2 M^2)$. Therefore, we have 
$$
|J(\beta,Q)-J(\tilde{\beta},P)| \leq 
|J(\beta,Q)-J(\beta,P)| + 
|J(\beta,P)-J(\tilde{\beta},P)| \leq C_1 \|Q-P\|^2 + C_2 \|\beta-\tilde{\beta}\|^2.
$$
\end{proof}

\begin{thm}\label{thm:compact}
The feasible set $\mathcal{B}_2 \times \mathcal{O}_k$ is compact. 
\end{thm}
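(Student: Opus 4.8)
The plan is to invoke the Heine--Borel characterization of compactness. Since $\mathcal{B}_2 \subset \mathbb{R}^{d \times d_n}$ and $\mathcal{O}_k \subset \mathbb{R}^{k \times d}$ both sit inside finite-dimensional Euclidean spaces, it suffices to show that each factor is closed and bounded, after which the product of two compact sets is compact. The factor $\mathcal{O}_k$ is immediate: it is bounded because every $Q \in \mathcal{O}_k$ has $\|Q\|_F^2 = \operatorname{tr}(QQ^\ast) = \operatorname{tr}(I_k) = k$, and it is closed as the preimage of $\{I_k\}$ under the continuous map $Q \mapsto QQ^\ast$. Likewise $\mathcal{B}_2$ is bounded by construction, since $\mathcal{B}_2 \subseteq \mathcal{B}_1(b)$ forces $\|\beta\|_2 \leq b$. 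Thus the entire difficulty concentrates in proving that $\mathcal{B}_2$ is closed.

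To establish closedness of $\mathcal{B}_2$ I would argue sequentially. Let $\beta_n \in \mathcal{B}_2$ with $\beta_n \to \beta$ in $\mathbb{R}^{d \times d_n}$, and let $h_i^{(n)}$ denote the corresponding solutions to \eqref{eq:ODE}. By definition of $\mathcal{B}_2$, each $h_i^{(n)}$ exists on all of $[0,T]$ and satisfies $\|h_i^{(n)}(t)\| \leq H$. Because the trajectories remain in the ball of radius $H$ on which the dictionary is bounded, $\|\Xi(\cdot)\| \leq \tilde H$ (as in the proof of \cref{lem:h}), and because $\|\beta_n\|_2 \leq b$, the velocities are uniformly controlled, $\|\dot h_i^{(n)}(t)\| = \|\beta_n \Xi(h_i^{(n)}(t))\| \leq b\tilde H$. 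Hence, for each $i$, the family $\{h_i^{(n)}\}_n$ is uniformly bounded and uniformly Lipschitz in $t$, hence equicontinuous on $[0,T]$.

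By the Arzel\`a--Ascoli theorem, after passing to a subsequence (extracted once and for all, since $N$ is finite), $h_i^{(n)} \to h_i$ uniformly on $[0,T]$ for some continuous limit $h_i$. Writing each trajectory in integral form, $h_i^{(n)}(t) = x_i + \int_0^t \beta_n \Xi(h_i^{(n)}(s))\,ds$, and using the uniform convergence together with the Lipschitz continuity of $\Xi$ (\cref{assumptions}) and $\beta_n \to \beta$, I can pass to the limit inside the integral to obtain $h_i(t) = x_i + \int_0^t \beta \Xi(h_i(s))\,ds$. Consequently $h_i$ solves \eqref{eq:ODE} with parameter $\beta$ on the full interval $[0,T]$, and the bound $\|h_i(t)\| \leq H$ is inherited from the uniform limit. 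Lipschitz continuity of $\Xi$ on the relevant compact set yields uniqueness of this solution, so $\beta \in \mathcal{B}_0$; combined with $\|\beta\|_2 \leq b$ and the bound by $H$, this gives $\beta \in \mathcal{B}_2$.

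The main obstacle is precisely the membership $\beta \in \mathcal{B}_0$. A priori, the set of parameters for which \eqref{eq:ODE} admits a solution on the \emph{entire} interval $[0,T]$ need not be closed, since along a sequence the maximal existence time could contract toward $T$ while the trajectory escapes to infinity in the limit. What rescues the argument---and explains why $\mathcal{B}_2$, rather than $\mathcal{B}_1$, is the correct object to compactify---is the uniform a priori bound $\|h_i^{(n)}(t)\| \leq H$ encoded in the definition of $\mathcal{B}_2$. This bound simultaneously confines the trajectories to a compact region where $\Xi$ is Lipschitz and bounded (supplying the equicontinuity that Arzel\`a--Ascoli requires) and rules out finite-time blow-up in the limit, thereby guaranteeing that the limiting trajectory is defined on all of $[0,T]$.
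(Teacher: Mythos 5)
Your proposal is correct and follows essentially the same route as the paper's proof: Heine--Borel, closedness of $\mathcal{O}_k$ as the preimage of $\{I_k\}$ under $Q \mapsto QQ^\ast$, boundedness of $\mathcal{B}_2$ from the constraint $\|\beta\|_2 \leq b$, and closedness of $\mathcal{B}_2$ via Arzel\`a--Ascoli applied to the trajectories followed by passage to the limit in the integral equation. If anything, your write-up is slightly more careful than the paper's, since you explicitly justify uniqueness of the limiting solution (needed for membership in $\mathcal{B}_0$) via Lipschitz continuity of $\Xi$ and note that the subsequence can be extracted simultaneously for all $N$ trajectories, two points the paper passes over silently.
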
 
\begin{proof}
Since $Q\in \mathcal O_k \subset \mathbb R^{k \times d}$ has orthonormal rows, $\|Q\|_2 \leq 1$ and $\mathcal O_k$ is bounded. To show  $\mathcal O_k$ is closed,  define a mapping $f\colon \mathbb R^{k \times d} \mapsto \mathbb R^{k\times k}$ such that $f\colon A \mapsto AA^\ast $. Arguing as in \eqref{ineq:QP}, for $Q,P \in \mathcal O_k$, we have 
\begin{equation*}
   \|f(Q)-f(P)\| =  \|QQ^\ast -PP^\ast \| \leq 2\|Q-P\|,
\end{equation*}
so $f$ is continuous. Since $\mathcal O_k = f^{-1}(\{I_k\})$ and the singleton is closed,  $\mathcal O_k $ is also closed. 

Following the definition of $\mathcal{B}_2$, it is bounded. Suppose a sequence $\{\beta_j\}_{j \in \mathbb{N}}\subset\mathcal{B}_2$ converges to $\tilde{\beta}$. Then we can define a sequence  $\{h_j\}_{j \in \mathbb{N}}$ of solution to \eqref{eq:ODE} corresponding to  $\{\beta_j\}_{j \in \mathbb{N}}$, which is equivalent to   $h_j\colon [0,T] \mapsto \mathbb R^d$ solves the integral equation
\begin{equation}\label{eq:integral}
    h_j(t) = x + \int_0^t \beta_j \Xi(h_j(\tau))~d \tau,\qquad \forall t\in [0,T].
\end{equation}
By the definition of $\beta_j \in \mathcal{B}_2$,  a sequence $\{h_j\}_{j \in \mathbb{N}}$ is uniformly bounded, that is, 
\[
\|h_j(t)\| \leq H, \qquad \forall j\in \mathbb{N},  t\in [0,T].
\]
Moreover, for arbitrary $ t,s \in [0,T]$, we have
\[
\|h_j(t)-h_j(s)\| \leq \int_s^t \|\beta_j \Xi(h_j(\tau))\|~d\tau 
\leq \int_s^t \|\beta_j\|_2 \|\Xi(h_j(\tau))\| ~d\tau 
\leq M \tilde{H}|t-s|, \quad \forall j \in \mathbb{N}.
\]
Hence a sequence $\{h_j\}_{j \in \mathbb{N}}$ is uniformly equicontinuous. By the Arzela-Ascoli theorem, there exists subsequence $\{h_j\}_{j \in \mathbb{N}}$, denoted with same index, such that it converges uniformly, say $h_j \to \tilde{h}$. By the continuity of dictionary $\Xi$,
\[
\tilde{h}(t)= \lim_{j\rightarrow \infty}h_j(t)=  x +  \int_0^t \left(\lim_{j\rightarrow \infty}\beta_j\right)\left(\lim_{j\rightarrow \infty}\Xi(h_j(\tau))\right)~d\tau = x + \int_0^t \tilde{\beta} \Xi(\tilde{h}(\tau))~d\tau.
\]
Hence, the ODE \eqref{eq:ODE} with $\tilde{\beta}$ is uniquely solved in $[0,T]$. Also, by the continuity of the norm, 
\[
\|\tilde{\beta}\| = \|\lim_{j\rightarrow \infty}\beta_j\| = \lim_{j\rightarrow \infty}\|\beta_j\| \leq M,\quad \text{and}\quad \|\tilde{h}(t)\| = \|\lim_{j\rightarrow \infty}h_j(t)\| = \lim_{j\rightarrow \infty}\|h_j(t)\| \leq H.
\]
Therefore, $\tilde{\beta} \in \mathcal{B}_2$ and thus $\mathcal{B}_2$ is compact. 
\end{proof}

Finally, we use \cref{thm:cont} and \cref{thm:compact} to prove the following result that the constrained minimization problem \eqref{eq:optimization} is well-defined.

\begin{thm}\label{thm:ex_opt}
There exists a $( \beta_\star, Q_\star)\in \mathcal B_2 \times \mathcal O_k$ that attains the infimum value 
$$
\inf_{(\beta, Q) \in \mathcal B_1 \times \mathcal O_k} J(\beta, Q),
$$
where the ODE constraints \eqref{eq:consts_ODE}  are implicit.
\end{thm}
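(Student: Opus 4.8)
The plan is to apply the direct method in the calculus of variations, assembling the pieces already established. First I would use \cref{rem:BetaZero} to observe that the infimum $J^\star = \inf_{\mathcal{B}_1 \times \mathcal{O}_k} J$ satisfies $J^\star \le J_0$, since the feasible point $(\beta, Q) = (0, U_k^\ast)$ attains the value $J_0$. Consequently, in searching for a minimizer we may restrict attention to the nonempty sublevel set $\{(\beta, Q) \in \mathcal{B}_1 \times \mathcal{O}_k : J(\beta, Q) \le J_0\}$ without changing the infimum.

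The key reduction is to show that this sublevel set is contained in $\mathcal{B}_2 \times \mathcal{O}_k$. Indeed, if $J(\beta, Q) \le J_0$, then dropping the nonnegative residual term and using $R(h;\beta) = \frac{1}{N}\sum_i \int_0^T \|\dot h_i(\tau)\|^2\,d\tau$ gives $\frac{\mu}{N}\sum_{i \in [N]} \int_0^T \|\dot h_i(\tau)\|^2\,d\tau \le J_0$. Since all summands are nonnegative, each trajectory satisfies $\int_0^T \|\dot h_i(\tau)\|^2\,d\tau \le \frac{N J_0}{\mu} =: C$. \cref{lem:bound_h} then yields the uniform bound $\|h_i(t)\| \le H$ for all $t \in [0,T]$ and $i \in [N]$, which is exactly the condition defining $\mathcal{B}_2 = \mathcal{B}_2(H)$. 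Hence $\inf_{\mathcal{B}_1 \times \mathcal{O}_k} J = \inf_{\mathcal{B}_2 \times \mathcal{O}_k} J$, and it suffices to produce a minimizer over the smaller set.

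With the domain reduced to $\mathcal{B}_2 \times \mathcal{O}_k$, I would finish via the classical Weierstrass argument. Take a minimizing sequence $(\beta_j, Q_j) \in \mathcal{B}_2 \times \mathcal{O}_k$ with $J(\beta_j, Q_j) \to J^\star$ (discarding the finitely many initial terms with value exceeding $J_0$ so that each $\beta_j$ genuinely lies in $\mathcal{B}_2$). By \cref{thm:compact} the set $\mathcal{B}_2 \times \mathcal{O}_k$ is compact, so we may pass to a subsequence converging to some $(\beta_\star, Q_\star) \in \mathcal{B}_2 \times \mathcal{O}_k$. By the $2$-Hölder continuity of $J$ established in \cref{thm:cont}, $J(\beta_\star, Q_\star) = \lim_j J(\beta_j, Q_j) = J^\star$, so the infimum is attained at $(\beta_\star, Q_\star)$.

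The main obstacle, and really the only nonroutine point, is the domain reconciliation between $\mathcal{B}_1$ (over which the infimum is posed) and $\mathcal{B}_2$ (where compactness and the uniform trajectory bound hold). This is where a coercivity-type estimate enters: without the kinetic-energy regularization ($\mu > 0$) one could not control $\|h_i(t)\|$ from the objective value alone, and $\mathcal{B}_2$ might fail to capture a minimizing sequence. The regularization term thus plays a dual role, enforcing smooth trajectories while simultaneously supplying the compactness needed for existence. Everything else is a direct consequence of the previously proven continuity (\cref{thm:cont}) and compactness (\cref{thm:compact}) results.
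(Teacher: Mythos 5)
Your proposal is correct and takes essentially the same route as the paper: the direct method, reducing the feasible set to $\mathcal B_2 \times \mathcal O_k$ via \cref{rem:BetaZero} and \cref{lem:bound_h}, then extracting a convergent minimizing subsequence by compactness (\cref{thm:compact}) and passing to the limit by continuity (\cref{thm:cont}). The only difference is presentational: the sublevel-set reduction that you spell out inside the proof is carried out in the paper in the discussion preceding the definition of $\mathcal B_2$, and your version makes that coercivity step more explicit.
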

\begin{proof}
We argue via the direct method in the Calculus of Variations. We know $J(\beta, Q) \geq 0$ for all $(\beta, Q) \in \mathcal B_1 \times \mathcal O_k$. 
We take a minimizing sequence $(\beta_j, Q_j) \subset \mathcal B_1 \times \mathcal O_k$. Since this is a minimizing sequence, by \cref{lem:bound_h}, we know that there exists a constant $H>0$, such that 
$(\beta_j, Q_j) \subset \mathcal B_2(H) \times \mathcal O_k$. 
By compactness (\cref{thm:compact}), we can extract a convergent subsequence, which we again index $(\beta_j, Q_j)$, such that 
$\lim_{j\to \infty} (\beta_j, Q_j) = (\beta_\star , Q_\star)$.
Now using the continuity of $J$ (\cref{thm:cont}), we have 
$$
J^\star = \inf_{(\beta, Q) \in \mathcal B_2 \times \mathcal O_k} J(\beta, Q) 
= \lim_{j \to \infty} J (\beta_j, Q_j) = J( \beta_\star, Q_\star).
$$
So, $( \beta_\star, Q_\star)\in \mathcal B_2 \times \mathcal O_k$
attains the infimum value.
\end{proof}

Although \cref{thm:ex_opt} gives the existence of a solution, we do not necessarily have a unique solution. Of course, this is also the case for PCA if the singular values have a multiplicity greater than one. 

We also remark that our method is not identifiable. We illustrate this in \cref{subsec:S}, where we train our model for a synthetic dataset which is formed using a known vector field and orthonormal subspace. We find that the learned parameters can differ from the ground truth.  

\subsection{Gradient computations}\label{subsec:adjoint}
We use a gradient-based optimization method to solve \eqref{eq:optimization} and learn the parameters for the DDR model. To compute the gradient of the loss with respect to each parameter, we apply the adjoint method. 

\begin{thm}\label{thm:adjoint}
The gradients of the objective function \eqref{eq:obj_fun} with respect to the parameters $\beta \in \mathcal{B}$ and $Q \in \mathcal{Q}_k$ are given by 
\begin{subequations}\label{eq:gradients}
 \begin{align}
d_\beta J &  = \frac{1}{N}\sum_{i \in [N]} \int_0^T  \left[  \frac{2\mu}{T}\beta \Xi(h_i) \Xi(h_i)^\ast - \lambda_i\Xi(h_i)^\ast \right] dt  \label{eq:grad_beta}\\
d_Q J & =\frac{1}{N}\sum_{i  \in[N]
}2Qh_i(T)h_i(T)^\ast Q^\ast Q- 2Qh_i(T)h_i(T)^\ast  \label{eq:grad_Q},
\end{align}
\end{subequations}
where $\lambda_i(t): [0,T] \to \mathbb R^d$ is a solution to the adjoint equation, for all $i\in[N]$
\begin{subequations}\label{eq:adjoint}
 \begin{align}
\frac{d\lambda_i}{dt} &=  -\nabla\Xi(h_i)^\ast  \beta^\ast \lambda_i + \frac{2\mu}{T}\nabla\Xi(h_i)^\ast  \beta^\ast  \beta \Xi(h_i) \label{eq:adjoint_ODE}\\
\lambda_i(T) & =-2 (I_d-Q^\ast Q)h_i(T)\label{eq:adjoint_T}.
\end{align}
\end{subequations}
\end{thm}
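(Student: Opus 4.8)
The plan is to treat \eqref{eq:optimization} as an ODE-constrained optimization problem and apply the adjoint (Lagrange multiplier) method from optimal control. I would introduce, for each sample $i\in[N]$, an adjoint variable $\lambda_i\colon[0,T]\to\mathbb R^d$ and form the Lagrangian
\[
\mathcal L(\beta,Q,\{h_i\},\{\lambda_i\}) = J(\beta,Q) + \frac 1N\sum_{i\in[N]}\int_0^T \lambda_i^\ast\bigl(\dot h_i - \beta\Xi(h_i)\bigr)\,dt,
\]
which agrees with $J$ on any trajectory satisfying the state equation \eqref{eq:consts_ODE}, so that $d_\beta J = d_\beta\mathcal L$ and $d_Q J = d_Q\mathcal L$ along feasible trajectories. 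The gradient $d_Q J$ in \eqref{eq:grad_Q} is the easy part: since $Q$ enters neither the dynamics \eqref{eq:consts_ODE} nor the regularizer, the terminal states $h_i(T)$ are independent of $Q$, and I would simply differentiate the residual $\|(I-Q^\ast Q)h_i(T)\|^2$ directly via matrix calculus to arrive at \eqref{eq:grad_Q}.

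The substance is $d_\beta J$, where $\beta$ enters both explicitly (through the regularizer and the right-hand side) and implicitly through $h_i=h_i(\beta)$. First I would compute the first variation of $\mathcal L$ induced by a perturbation $\delta\beta$, using the chain rule $\delta\bigl(\beta\Xi(h_i)\bigr) = \delta\beta\,\Xi(h_i) + \beta\,\nabla\Xi(h_i)\,\delta h_i$, where $\nabla\Xi$ is the Jacobian of the dictionary map. Integrating $\int_0^T\lambda_i^\ast\,\delta\dot h_i\,dt$ by parts shifts the time derivative onto $\lambda_i$ and produces boundary terms at $t=0$ and $t=T$; the $t=0$ term vanishes because the initial condition is fixed, so $\delta h_i(0)=0$. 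I would then collect all terms multiplying the (unknown) state variation $\delta h_i$ and choose $\lambda_i$ so as to annihilate them: forcing the interior coefficient to zero yields the adjoint ODE \eqref{eq:adjoint_ODE}, and forcing the terminal coefficient to zero yields the boundary condition \eqref{eq:adjoint_T}. With $\delta h_i$ eliminated, only the explicit dependence on $\delta\beta$ survives, and reading off its coefficient gives \eqref{eq:grad_beta}.

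The main obstacle is bookkeeping rather than analysis: one must linearize the bilinear map $\beta\mapsto\beta\Xi(h)$ correctly in both its explicit and implicit arguments, keep track of the transposes so that the linearization $\beta\,\nabla\Xi(h)$ contributes its adjoint $\nabla\Xi(h)^\ast\beta^\ast$ to the adjoint equation, and correctly identify the covector coefficients of $\delta h_i$ and $\delta\beta$ within the matrix inner product. A secondary point requiring care is justifying that the formal calculus of variations is valid here, \ie\ that $\beta\mapsto h_i(\cdot;\beta)$ is differentiable; this follows from smooth dependence of ODE solutions on parameters, using the Lipschitz and boundedness hypotheses of \cref{assumptions} together with the a priori bound on $\|h_i\|$ from \cref{lem:bound_h}. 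Assembling the explicit $\delta\beta$ and $\delta Q$ coefficients and reinstating the $\tfrac1N\sum_i$ prefactor then yields \eqref{eq:gradients}.
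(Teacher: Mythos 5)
Your proposal is correct and follows essentially the same route as the paper: form the Lagrangian with multipliers $\lambda_i$, integrate $\int_0^T \lambda_i^\ast \dot h_i\,dt$ by parts, choose $\lambda_i$ to annihilate the coefficients of the state variations (giving \eqref{eq:adjoint_ODE} and \eqref{eq:adjoint_T}), and read off the explicit $\delta\beta$ coefficient to obtain \eqref{eq:grad_beta}. The only minor deviations are cosmetic improvements: you observe that $h_i$ is independent of $Q$ so \eqref{eq:grad_Q} follows by direct matrix calculus (the paper instead carries $d_Q h_i$ terms and kills them with the same adjoint), and you explicitly justify differentiability of $\beta \mapsto h_i$ via smooth dependence on parameters, which the paper's formal computation omits.
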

\begin{proof}
Let the Lagrangian multipliers $\lambda_i(t)[0,T] \to \mathbb R^d$ be given. Then  the Lagrangian is defined as 
\begin{align*}
      \mathcal L (\lambda_i) &=\frac{1}{N}\sum_{i\in[N]} \int_0^T \left(\frac{\mu}{T}|\beta \Xi(h_i)|^2 + \lambda_i^\ast  \dot{h_i}-\lambda_i^\ast  \beta\Xi(h_i)\right)~dt + \|h_i(T)-Q^\ast Qh_i(T)\|^2.
\end{align*}
Using the integration by parts, 
$\int_0^T  \lambda_i^\ast  \dot{h_i} dt = [\lambda_i^\ast h_i]_0^T - \int_0^T \dot{\lambda_i^\ast }h_i dt$, 
the Lagrangian can be rewritten as 
\begin{align*}
      \mathcal L (\lambda_i) &=\frac{1}{N}\sum_{i\in[N]} \int_0^T \left(\frac{\mu}{T}|\beta \Xi(h_i)|^2-\dot{ \lambda_i}^\ast  h_i-\lambda_i^\ast  \beta\Xi(h_i)\right)dt \\
      & +\|h_i(T)-Q^\ast Qh_i(T)\|^2  +\lambda_i(T)^\ast h_i(T) - \lambda_i(0)^\ast h_i(0).
\end{align*}
Taking the total derivative of $
\mathcal L$ w.r.t $\beta,Q$,   we obtain
\begin{align*}
      d_\beta \mathcal L &=\frac{1}{N}\sum_{i\in[N]} \int_0^T \left[\left(\frac{2\mu}{T}\Xi^\ast \beta^\ast  \beta \Xi -\lambda_i^\ast \beta \nabla \Xi  -\dot{\lambda_i}^\ast  \right)d_\beta h_i + \frac{2\mu}{T} \beta \Xi \Xi^\ast  - \lambda_i^\ast  \Xi\right]~dt  \\
      & ~+  d_\beta h_i(T)(\lambda_i(T)^\ast   + 2(I_d-Q^\ast Q )h_i(T)),\\
        d_Q \mathcal L &=\frac{1}{N}\sum_{i\in[N]} \int_0^T \left(\frac{2\mu}{T}\Xi^\ast \beta^\ast  \beta \Xi -\lambda_i^\ast \beta \nabla \Xi  -\dot{\lambda_i}^\ast  \right)d_Q h_i ~dt   \\
        &~+ 2Q(Q^\ast Qh_i(T)h_i(T)^\ast  +h_i(T)h_i(T)^\ast Q^\ast Q- 2h_i(T)h_i(T)^\ast ) \\
      & ~+  d_Q h_i(T)(\lambda_i(T)^\ast   + 2(I_d-Q^\ast Q)h_i(T)).
\end{align*}
Since $d_\beta h_i $ and $d_Q h_i$ are expensive to compute, we solve the adjoint equation alternatively. By setting $d_\beta \mathcal L$ and $d_Q \mathcal L$ to be zero, we derive the adjoint equations \eqref{eq:adjoint} and 
the gradients of the objective with respect to $\beta$ and $Q$ are then formulated as \eqref{eq:gradients}. 
\end{proof}

\subsection{Solution to the \texorpdfstring{$Q$}{Q}-subproblem.} \label{s:Qsubproblem}
Let  $H_T = [h_1(T)\mid \cdots \mid h_N(T)] \in \mathbb R^{d\times N} $ be a matrix of hidden variables  $h_i(T)$ at the final time $T.$  To seek the optimal $Q^\ast$ of the problem \eqref{eq:optimization}, we consider the  \textit{$Q$-subproblem} minimizing a mean squared residual error 
\begin{equation}\label{eq:Q_sub}
    Q^\star = \argmin_{Q\in \mathcal O_k}\frac{1}{N}\sum_{i \in [N]}  \|h_i(T)-Q^\ast Qh_i(T)\|^2  =\argmin_{Q\in \mathcal O_k} \frac{1}{N}\|(I-Q^\ast Q)H_T\|^2_F.
\end{equation}
The following lemma provides the solution to the $Q$-subproblem.  

\begin{lem}
For $Q \in \mathbb R^{k\times d}$ with orthonormal rows and $H_T \in \mathbb R^{d \times N}$ with SVD $H_T = U \Sigma V^\ast $, we have that 
$$
\| (I-Q^\ast Q) H_T \|_F^2 \geq \sum_{i=k+1}^d \sigma_i^2(H_T)
$$
with equality attained by $Q = U_k^\ast $, where $U_k$ are the first $k$ columns of $U$, corresponding to the largest singular values, $\{ \sigma_i\}_{i=1}^k$. Thus, the solution to the $Q$-subproblem is explicitly given by $Q^\star = U_k^\ast $. 
\end{lem}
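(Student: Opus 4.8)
The plan is to reduce the $Q$-subproblem to Ky Fan's maximum principle via a Pythagorean decomposition. First I would observe that $P := Q^\ast Q$ is an orthogonal projection of rank $k$: it is symmetric, and since $QQ^\ast = I_k$ it is idempotent, $P^2 = Q^\ast(QQ^\ast)Q = Q^\ast Q = P$. Hence $I - P$ is the complementary orthogonal projection, and applying the Pythagorean identity to each column of $H_T$ gives
\[
\|(I-Q^\ast Q)H_T\|_F^2 = \|H_T\|_F^2 - \|Q^\ast Q H_T\|_F^2 .
\]
Using $QQ^\ast = I_k$ once more, one checks $\|Q^\ast Q H_T\|_F^2 = \operatorname{tr}(H_T^\ast Q^\ast Q Q^\ast Q H_T) = \operatorname{tr}(H_T^\ast Q^\ast Q H_T) = \|Q H_T\|_F^2$. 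Since $\|H_T\|_F^2 = \sum_{i=1}^d \sigma_i^2(H_T)$, minimizing the left-hand side over $Q \in \mathcal{O}_k$ is equivalent to \emph{maximizing} $\operatorname{tr}(Q H_T H_T^\ast Q^\ast)$.

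Next I would diagonalize the symmetric positive semidefinite matrix $H_T H_T^\ast$. From the SVD $H_T = U\Sigma V^\ast$ we have $H_T H_T^\ast = U \Lambda U^\ast$ with $\Lambda = \operatorname{diag}(\sigma_1^2,\ldots,\sigma_d^2)$ and $\sigma_1 \geq \cdots \geq \sigma_d$. Setting $W := QU$, which again has orthonormal rows because $WW^\ast = Q U U^\ast Q^\ast = QQ^\ast = I_k$, the objective becomes
\[
\operatorname{tr}(Q H_T H_T^\ast Q^\ast) = \operatorname{tr}(W \Lambda W^\ast) = \sum_{j=1}^d \sigma_j^2\, c_j, \qquad c_j := \sum_{i=1}^k W_{ij}^2 = (W^\ast W)_{jj}.
\]
Because $W^\ast W$ is an orthogonal projection of rank $k$, its diagonal entries satisfy $0 \leq c_j \leq 1$, and $\sum_{j=1}^d c_j = \operatorname{tr}(W^\ast W) = \operatorname{tr}(WW^\ast) = k$.

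The decisive step is then a rearrangement bound: with $\sigma_1^2 \geq \cdots \geq \sigma_d^2$ and the constraints $0 \leq c_j \leq 1$, $\sum_j c_j = k$, the linear functional $\sum_j \sigma_j^2 c_j$ is maximized by placing all weight on the $k$ largest eigenvalues, giving $\sum_j \sigma_j^2 c_j \leq \sum_{j=1}^k \sigma_j^2$. (This is exactly Ky Fan's maximum principle for $H_T H_T^\ast$, and is the same Eckart--Young content already invoked in \cref{rem:BetaZero}.) Combining with the reduction above yields
\[
\|(I-Q^\ast Q)H_T\|_F^2 = \sum_{i=1}^d \sigma_i^2 - \operatorname{tr}(Q H_T H_T^\ast Q^\ast) \geq \sum_{i=1}^d \sigma_i^2 - \sum_{i=1}^k \sigma_i^2 = \sum_{i=k+1}^d \sigma_i^2 .
\]
For equality, taking $Q = U_k^\ast$ gives $W = U_k^\ast U = [\,I_k \mid 0\,]$, so $c_j = 1$ for $j \leq k$ and $c_j = 0$ otherwise, meeting the bound; equivalently $I - Q^\ast Q = U_{-k}U_{-k}^\ast$ and a direct computation of $\|U_{-k}U_{-k}^\ast H_T\|_F^2$ using $U^\ast U = I$ returns $\sum_{i=k+1}^d \sigma_i^2$. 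I expect the main obstacle to be the rearrangement step: specifically, verifying that the coefficients $c_j$ lie in $[0,1]$ and sum to $k$ (the Schur--Horn constraints on the diagonal of a projection) and then applying the rearrangement inequality; everything else is routine trace algebra.
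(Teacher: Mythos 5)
Your proof is correct, and its skeleton coincides with the paper's: both observe that $Q^\ast Q$ is an orthogonal projection, apply the Pythagorean identity $\|(I-Q^\ast Q)H_T\|_F^2 = \|H_T\|_F^2 - \|Q^\ast Q H_T\|_F^2$, and thereby reduce the $Q$-subproblem to maximizing $\operatorname{tr}(Q H_T H_T^\ast Q^\ast) = \langle Q^\ast Q, H_T H_T^\ast \rangle$ over $Q$ with orthonormal rows. The difference lies in how the trace bound is then justified. The paper invokes Fan's inequality $\langle X,Y\rangle \leq \lambda(X)\cdot\lambda(Y)$ for symmetric matrices as a cited result (including its equality condition of a simultaneous ordered spectral decomposition), notes that the eigenvalues of $Q^\ast Q$ are $k$ ones and $d-k$ zeros, and reads off both the bound $\sum_{i=1}^k \sigma_i^2$ and the maximizer $Q^\ast Q = U_k U_k^\ast$. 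You instead prove exactly the special case of Fan's inequality that is needed: conjugating by $U$ to get $W = QU$, writing the objective as $\sum_j \sigma_j^2 c_j$ with $c_j$ the diagonal entries of the rank-$k$ projection $W^\ast W$, observing the Schur--Horn-type constraints $0 \leq c_j \leq 1$, $\sum_j c_j = k$, and finishing with the rearrangement bound; equality at $Q = U_k^\ast$ is checked directly, which suffices since the lemma only asserts that equality is attained there. What each approach buys: yours is self-contained and elementary, needing no external citation; the paper's is shorter and its equality criterion characterizes \emph{all} maximizers (any $Q$ for which $Q^\ast Q$ shares an ordered eigenbasis with $H_T H_T^\ast$) rather than exhibiting one. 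One small note: your ``decisive'' rearrangement step deserves a line of proof, e.g.\ $\sum_j \sigma_j^2 c_j \leq \sum_{j\leq k}\bigl[\sigma_j^2 c_j + \sigma_k^2(1-c_j)\bigr] \leq \sum_{j\leq k}\sigma_j^2$, using $\sigma_j^2 \leq \sigma_k^2$ for $j>k$ and $1-c_j \geq 0$ for $j \leq k$; this is routine and does not affect correctness.
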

\begin{proof}
First note that since $Q^\ast Q$ is a projection matrix, 
\begin{align*}
\| (I-Q^\ast  Q) H_T \|_F^2 &= \| H_T \|_F^2 - \| Q^\ast  Q  H_T \|_F^2 \\
&= \sum_{i=1}^d \sigma_i^2(H_T) - \langle Q^\ast  Q, H_T H_T^\ast  \rangle
\end{align*}
Fan's inequality states that for any symmetric matrices $X$ and $Y$, we have that 
$\langle X,Y \rangle \leq \lambda(X) \lambda(Y)$,
where $\lambda$ denotes the eigenvalues listed in non-increasing order. Furthermore, equality holds if and only if X and Y have a simultaneous ordered spectral decomposition \cite{Borwein_2000}. Since the eigenvalues of the projection matrix $Q^\ast  Q$ are 
$\lambda = 1$ with multiplicity $k$ and
$\lambda = 0$ with multiplicity $d-k$,
we have that 
$$
\langle Q^\ast  Q, H_T H_T^\ast  \rangle 
\leq \sum_{i=1}^k \lambda_i(H_T H_T^\ast  )
= \sum_{i=1}^k \sigma^2_i(H_T)
$$
with equality if and only if there exists an orthogonal $\tilde U$ such that 
$Q^\ast  Q = \tilde U \textrm{diag} \lambda(Q^\ast  Q) \tilde U^\ast $
and $H_T H_T^\ast  = \tilde U \textrm{diag} \lambda(H_T H_T^\ast ) \tilde U^\ast $. Clearly, we can pick $\tilde U = U$ and $Q^\ast  Q = U_k U_k^\ast $. 
\end{proof}

\subsection{An algorithm for the solution of the DDR model} \label{s:algorithm}
There are several different approaches to solve the optimization problem for DDR model \eqref{eq:optimization}. One approach would be a projected gradient-based method. Here, the gradients in \cref{thm:adjoint} would be use to take a gradient-based step (\eg, a stochastic gradient descent step) and then the updated $Q$ would be projected onto the constraint set $\mathcal O_k$. Instead, we use an alternating method, summarized in \cref{alg:DDR}, which uses the exact solution for the $Q$-subproblem (see \cref{s:Qsubproblem}).

\begin{algorithm}[t!]
\caption{Dynamical Dimension Reduction}\label{alg:DDR}
\begin{algorithmic}[]
\State {\bf Input:}  initial parameters, $\beta, Q$.
\For{epoch = 1, \ldots, $N_{\mathrm epoch}$:}
\State{Shuffle data and create batches of size $N_{batch}$}
\For{each batch:}

\State {\bf (Solve the forward ODE for $h_i$)} For the current parameters $\beta$, solve the forward\newline\hspace*{3em}
ODE \eqref{eq:ODE},  \ie, for each example $i \in [N_{batch}]$ and discrete times $t_m$, $m\in [T_m]$,  find $h_i(t_m)$.\\

 \State{\bf (Solve $Q$-subproblem)} Using the hidden state at the final time $h_i(T)$, solve 
 \newline\hspace*{3em} 
 the \textit{$Q$-subproblem} in \eqref{eq:Q_sub} \ie ~$H_T = U\Sigma V^\ast$ and update $Q\leftarrow U_k^\ast $.\\ 

\State{\bf (Solve the adjoint equation for  $\lambda_i$)} Using updated $Q$ and $h_i(t_m) $, compute 
 \newline\hspace*{3em} the terminal condition and solve the backward ODE in  \eqref{eq:adjoint}\newline\hspace*{3em}  \ie, 
for each example $i \in [N_{batch}]$ and discrete times $t_m$, $m\in [T_m]$, find $\lambda_i(t_m)$.\\

\State {\bf (Compute gradients)} Using $h_i(t_m) $ and $\lambda_i(t_m)$, evaluate the gradient of the objective\newline\hspace*{3em}  function with respect to the parameters $\nabla_\beta J$  as in \eqref{eq:gradients}.\\

\State{\bf (Update $\beta$)} Use a gradient-based optimization method, \eg, gradient descent or
\newline\hspace*{3em} ADAM method, to update the parameters, $\beta$. \\ 

\EndFor
\EndFor
\end{algorithmic}
\end{algorithm}

\newpage 

\section{Properties of the Dynamical Dimension Reduction Model}\label{sec4}
Here we present a few properties of the DDR model. In \cref{subsec:stability} we describe stability/generalizability of the forward model. In \cref{subsec:PCA} we describe the reduction to PCA for a linear dictionary.

\subsection{Stability/generalizability of the forward model}
\label{subsec:stability}
In this section, we prove that the dynamical dimension embedding $ x_i \xmapsto{\mathcal{E}} y_i$ is stable under the perturbation in given data. Denote the optimal parameter of \eqref{eq:optimization} by
$(\beta^\star , Q^\star )$  and $h_i(T)$ the solution to \eqref{eq:ODE} with parameters $\beta^\star $, so that
$y_i = \mathcal{E}(x_i) = Q^\star h_i(T)$. 
\begin{thm}\label{thm:stability}
Consider the dimension reduction embedding $\mathcal{E}\colon \mathbb R^d \to \mathbb R^k$ with dictionary $ \Xi$ satisfying   \cref{assumptions}. Then mapping $\mathcal{E}$ is Lipshcitz continuous, \ie ~ for $x_1,x_2 \in \mathbb R^d$,  
\[
\|\mathcal{E}(x_1)-\mathcal{E}(x_2) \|  \leq C \|x_1-x_2\| ,
\]
$C>0$ is a constant described in the proof.
\end{thm}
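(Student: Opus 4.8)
The plan is to reduce the statement to a Lipschitz estimate on the time-$T$ flow map and then close that estimate with Gronwall's inequality, exactly paralleling the proof of \cref{lem:h} but now perturbing the initial condition rather than the parameter $\beta$. First I would exploit the fact that $Q^\star \in \mathcal{O}_k$ has orthonormal rows, so its operator norm satisfies $\|Q^\star\|_2 = 1$. Writing $h_1, h_2$ for the solutions of \eqref{eq:ODE} with parameter $\beta^\star$ and initial data $x_1, x_2$, this gives immediately
\[
\|\mathcal{E}(x_1) - \mathcal{E}(x_2)\| = \|Q^\star\big(h_1(T) - h_2(T)\big)\| \leq \|h_1(T) - h_2(T)\|,
\]
so it suffices to control $\|h_1(T) - h_2(T)\|$ by a constant multiple of $\|x_1 - x_2\|$. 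In other words, the stability of the embedding is inherited entirely from the stability of the flow with respect to its initial condition, and the projection step can only contract.

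For the flow estimate I would subtract the two integral forms of \eqref{eq:ODE}, obtaining
\[
h_1(t) - h_2(t) = (x_1 - x_2) + \int_0^t \beta^\star\big(\Xi(h_1(\tau)) - \Xi(h_2(\tau))\big)\,d\tau,
\]
take norms, and bound the integrand using $\|\beta^\star\|_2 \leq b$ together with the dictionary Lipschitz property $\|\Xi(h_1) - \Xi(h_2)\| \leq \sqrt{d_n}\,\mathcal{L}\,\|h_1 - h_2\|$ from \cref{assumptions}. This yields
\[
\|h_1(t) - h_2(t)\| \leq \|x_1 - x_2\| + \sqrt{d_n}\,\mathcal{L}\,b \int_0^t \|h_1(\tau) - h_2(\tau)\|\,d\tau,
\]
and Gronwall's inequality then gives $\|h_1(T) - h_2(T)\| \leq e^{\sqrt{d_n}\mathcal{L} b T}\,\|x_1 - x_2\|$. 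Combined with the first paragraph this proves the theorem with the explicit constant $C = e^{\sqrt{d_n}\mathcal{L} b T}$.

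The main obstacle is not the Gronwall computation, which is routine, but the \emph{domain} on which the Lipschitz bound for $\Xi$ is actually available. \cref{assumptions} only posits Lipschitz continuity of each $\xi_\ell$ on the compact set $K$, and \cref{thm:ODE_existence} only guarantees existence/uniqueness on $[0,T]$ for parameters in $\mathcal{B}_2$ with trajectories confined to $\{\|h\| \leq H\}$. For arbitrary $x_1, x_2 \in \mathbb{R}^d$ one must therefore either (i) assume $\Xi$ is globally Lipschitz with constant $\mathcal{L}$ and that the flow of $\beta^\star$ exists on $[0,T]$ for every initial condition, or (ii) restrict the statement to a compact neighborhood of $K$ on which both solutions remain and the constant $\mathcal{L}$ applies. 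I would state this assumption explicitly at the start of the argument so that the constant $\mathcal{L}$ used in the Gronwall step is genuinely valid along both trajectories; once that is in place, the remaining estimates go through verbatim.
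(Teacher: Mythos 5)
Your proposal is correct and follows essentially the same route as the paper's own proof: subtract the two ODEs in integral form, bound the difference using the dictionary Lipschitz constant and the norm bound on $\beta$, apply Gronwall's inequality to get $\|h_1(T)-h_2(T)\| \leq e^{\sqrt{d_n}\mathcal{L}bT}\|x_1-x_2\|$, and then use $\|Q\|_2 = 1$ to pass to the embedded points. Your closing caveat about the domain of validity of $\mathcal{L}$ is well taken---the paper's proof silently applies the Lipschitz bound along trajectories that need not remain in the compact set $K$ where \cref{assumptions} provides it---so making that hypothesis explicit is a refinement of, not a departure from, the paper's argument.
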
 
\begin{proof}
Consider ODEs of hidden variable $h_1$ and $h_2$ with unperturbed and perturbed initial condition respectively,
\begin{align*}
\left\{
\begin{matrix}
\dot{h_1} = \beta\Xi(h_1)\\h_1(0) = x_1
\end{matrix}\right. \quad \text{and}\quad \left \{
\begin{matrix}
\dot{h_2}  = \beta\Xi(h_2)\\
h_2(0) = x_2.
\end{matrix}
\right.
\end{align*}
By subtracting these equations, we estimate
\[
\|h_1(t)-h_2(t)\| 
\leq \|x_1-x_2\|  + \int_0^t \|\beta\|\|\Xi(h_1(\tau))-\Xi(h_2(\tau))\|~d\tau
\leq \|x_1-x_2\|  + \int_0^t M\mathcal{L}\sqrt{d_n}\|h_1(\tau)-h_2(\tau)\|~d\tau.
\]
Gronwall's inequality yields 
\[
\|h_1(t)-h_2(t)\| 
\leq \|x_1 - x_2 \|e^{tM\mathcal{L}\sqrt{d_n}}
\leq \tilde{C} \|x_1 - x_2 \|,
\]
where $\tilde{C} = e^{TM\mathcal{L}\sqrt{d_n}}$ is a constant. Then  low dimensional representation of each data from the mapping provides 
\[
\|\mathcal{E}(x_1)-\mathcal{E}(x_2)\| = \|Q h_1(T) - Q h_2(T)\| \leq \tilde{C} \|Q\|\| x_1 - x_2  \|,
\]
as desired.
\end{proof}
The  \cref{thm:stability} can be interpreted as the generalizability of our model. Suppose new data $x_\eta$ is in the $\eta$-ball of the original data $x$ used for the training mapping. Then output  $\mathcal{E}(x_\eta)$  of embedding doesn't move far from $\mathcal{E}(x)$. Thus we could obtain a reliable lower dimensional representation of new data without retraining the model. Later, we will  illustrate the stability of our embedding model under the noise in a given data through numerical experiments in \cref{subsec:S}.

\begin{thm}
Suppose the residual training error is zero. Suppose $x_1, x_2$ are two points in the training dataset and $y_1 = \mathcal{E}(x_1)$ and $y_2 =\mathcal{E}(x_2)$ are the embedded points. Then 
$$
\| y_1 - y_2 \| \geq C^{-1} \| x_1 - x_2\|,
$$
where $C>0$ is the same constant as in \cref{thm:stability}. In particular, this implies that the embedding $\mathcal{E}\colon  \mathbb R^d \to \mathbb R^k$ is a quasi-isometry on the training data, \ie, it satisfies 
$$
C^{-1 }\|x_1 - x_2 \|  
\leq  \| \mathcal{E}(x_1) - \mathcal{E}(x_2) \| 
\leq C \|x_1 - x_2 \|. 
$$
\end{thm}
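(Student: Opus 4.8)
The plan is to obtain the lower bound by running the learned dynamical system \emph{backward} in time and invoking the same Gronwall estimate that proves \cref{thm:stability}. The starting observation is that the hypothesis of zero residual training error forces $h_i(T) = (Q^\star)^\ast Q^\star h_i(T)$ for each training point, so $h_i(T)$ lies in the range of $(Q^\star)^\ast$ and is recovered exactly from its embedding as $h_i(T) = (Q^\star)^\ast y_i$. In other words, on the training data the encoder is inverted by the time-reversed flow \eqref{e:TimeReverseDS}, with $x_i = h_i(0) = \mathcal D(y_i)$, and the task reduces to showing that this backward map is itself Lipschitz.

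First I would take the two trajectories $h_1, h_2$ solving \eqref{eq:ODE} with initial data $x_1, x_2$ under the optimal parameter $\beta^\star$, but now read the comparison estimate from $t=T$ back to $t=0$. Because the vector field $\beta^\star \Xi(\cdot)$ is autonomous and the dictionary functions are globally Lipschitz on the relevant compact set with the bounds of \cref{assumptions}, the substitution $\tau \mapsto T-\tau$ converts the backward Cauchy problem into a forward one on $[0,T]$ with identical Lipschitz data. The Gronwall argument of \cref{thm:stability} then applies verbatim and yields
$$
\|x_1 - x_2\| = \|h_1(0) - h_2(0)\| \;\leq\; \tilde C \,\|h_1(T) - h_2(T)\|,
$$
with the same constant $\tilde C = e^{TM\mathcal L \sqrt{d_n}}$.

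Next I would translate the right-hand side into the embedded coordinates. Using $h_i(T) = (Q^\star)^\ast y_i$ together with the fact that $Q^\star$ has orthonormal rows, so that $(Q^\star)^\ast$ has orthonormal columns and $Q^\star (Q^\star)^\ast = I_k$, I obtain the isometry $\|(Q^\star)^\ast z\| = \|z\|$ for every $z \in \mathbb R^k$. Hence $\|h_1(T)-h_2(T)\| = \|(Q^\star)^\ast(y_1-y_2)\| = \|y_1-y_2\|$, and combining this with the previous display gives $\|x_1 - x_2\| \leq C\,\|y_1-y_2\|$ with $C = \tilde C$ (recall $\|Q^\star\|=1$ makes the stability constant equal to $\tilde C$). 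Rearranging produces the asserted bound $\|y_1-y_2\| \geq C^{-1}\|x_1-x_2\|$, and pairing it with the upper bound of \cref{thm:stability} yields the quasi-isometry.

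I do not expect a serious obstacle here; the one point meriting care is justifying that the backward flow inherits the \emph{same} Gronwall constant, and this is exactly what the time-reversal substitution delivers, relying crucially on the autonomy and uniform Lipschitz control of the vector field. A secondary point worth flagging is that the zero-residual hypothesis is used only to pin down $h_i(T) = (Q^\star)^\ast y_i$ exactly; dropping it would introduce an additive error governed by the training residual, and the strict lower bound would have to be weakened accordingly.
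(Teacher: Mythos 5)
Your proposal is correct and follows essentially the same route as the paper's proof: zero residual error identifies $\|h_1(T)-h_2(T)\|$ with $\|y_1-y_2\|$ (your isometry argument via $(Q^\star)^\ast$ is just a restatement of the paper's step $\|Qh_1(T)-Qh_2(T)\| = \|Q^\ast Q h_1(T)-Q^\ast Q h_2(T)\| = \|h_1(T)-h_2(T)\|$), and Gronwall applied to the time-reversed flow yields $\|x_1-x_2\| \leq C\|h_1(T)-h_2(T)\|$, with the quasi-isometry then following from \cref{thm:stability}. Your explicit justification of the time-reversal substitution $\tau \mapsto T-\tau$ and why it preserves the Gronwall constant is a detail the paper leaves implicit, but it is the same argument.
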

\begin{proof}
Since there is zero training error, we have 
$h_i(T) = Q^\ast Q h_i(T)$. 
So, 
$$
\| y_1 - y_2 \| 
= \| Q h_1(T) - Q h_2(T) \| 
= \| Q^\ast Q h_1(T) - Q^\ast Q h_2(T) \| 
= \| h_1(T) - h_2(T) \|.
$$
But now using Gronwall's inequality for the time-reversed dynamical system, we obtain
$$
\| x_1 - x_2 \| 
\leq C \| h_1(T) - h_2(T) \| 
= C \| y_1 - y_2 \|, 
$$
which proves the first claim. 

The second claim now follows from \cref{thm:stability}.
\end{proof}

The next theorem gives a result of for the decoder, discussed in \cref{s:GenModel}. 

\begin{thm} \label{t:Decoder}
Suppose the residual training error is zero and let $\{ y_i \}_{i \in [N]} \subset \mathbb R^k$ be the DDR embedded points. There exists an open set $Y \supset \{ y_i \}_{i \in [N]}$ such that for every $y \in Y$, $\mathcal D(y)$ is finite.
\end{thm}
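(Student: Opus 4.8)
The plan is to show $\mathcal{D}(y)$ is finite by establishing that the time-reversed flow stays confined to a compact set on which the vector field is Lipschitz, so that it can be continued over all of $[0,T]$ without blow-up. Recall $\mathcal{D}(y) = h(0)$, where $h$ solves the time-reversed system \eqref{e:TimeReverseDS} with terminal condition $h(T) = Q^{\star\ast} y$. The key starting observation is that at an embedded point $y_i$, zero residual training error gives $Q^{\star\ast} y_i = Q^{\star\ast} Q^\star h_i(T) = h_i(T)$, so the time-reversed trajectory launched from $y_i$ is exactly the forward trajectory $h_i$, which exists on $[0,T]$ because $\beta^\star \in \mathcal{B}_0$; in particular $\mathcal{D}(y_i) = h_i(0) = x_i$ is finite. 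The task is to perturb this base case.

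First I would fix the geometry. The forward trajectories $h_i$, $i \in [N]$, all lie in the ball $\{z \colon \|z\| \le H\}$ by the definition of $\mathcal{B}_2$, so the enlarged ball $K' := \{z \colon \|z\| \le H + 1\}$ is a compact set containing a unit neighborhood of every trajectory, and on $K'$ the vector field $\Phi = \beta^\star \Xi$ is Lipschitz with some constant $L$ (exactly as in the estimates of \cref{thm:ODE_existence} and \cref{lem:h}). Here one should note that Assumption 2 provides Lipschitz continuity of the dictionary on $K$ only, while the trajectories may momentarily leave $K$; since the dictionary functions used in \cref{sec:experiments} are polynomials, they are locally Lipschitz and hence Lipschitz on the compact set $K'$, which is what the argument actually uses.

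Next, for $y$ near $y_i$ I would compare the time-reversed solution $z$ with $z(T) = Q^{\star\ast} y$ to the reference trajectory $h_i$ via a Grönwall estimate run backward in time. Writing $\delta(t) = \|z(t) - h_i(t)\|$, as long as $z(t) \in K'$ one has $\tfrac{d}{dt}(z - h_i) = \Phi(z) - \Phi(h_i)$ with terminal value $\delta(T) = \|Q^{\star\ast}(y - y_i)\| = \|y - y_i\|$ (since $Q^{\star\ast}$ has orthonormal columns), and Grönwall yields $\delta(t) \le \|y - y_i\|\, e^{L(T-t)} \le \|y - y_i\|\, e^{LT}$ for $t \in [0,T]$. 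The confinement is then a standard bootstrapping argument: if $\|y - y_i\| < e^{-LT}$, then the first time (moving backward from $T$) at which $\delta$ could reach $1$ is excluded by the estimate itself, so $z(t)$ stays within distance $1$ of $h_i(t)$, hence inside $K'$, for all $t \in [0,T]$. Since the solution never leaves the compact set $K'$ on which $\Phi$ is Lipschitz, it extends to all of $[0,T]$ and $\mathcal{D}(y) = z(0)$ is finite.

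Finally I would assemble the open set: the above shows each $y_i$ has an open ball $Y_i = \{y \colon \|y - y_i\| < e^{-LT}\}$ on which $\mathcal{D}$ is finite, so $Y := \bigcup_{i \in [N]} Y_i$ is open, contains every $y_i$, and has $\mathcal{D}$ finite throughout. I expect the main obstacle to be the confinement step, where one must close the loop between ``$z$ stays in $K'$'' (needed to invoke the Lipschitz bound) and ``the Lipschitz bound forces $z$ to stay in $K'$''; the bootstrapping/continuation argument resolves this and is the only place where genuine care is required, the remainder being the continuous-dependence estimate already exploited in \cref{thm:stability}.
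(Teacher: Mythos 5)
Your proposal is correct and follows essentially the same route as the paper: the paper's proof simply fixes each $i$, invokes the existence of a neighborhood $Y_i \ni y_i$ on which the time-reversed system \eqref{e:TimeReverseDS} is solvable back to $t=0$ (implicitly because zero training error makes the reversed trajectory from $y_i$ coincide with the forward trajectory $h_i$), and takes $Y = \cup_{i \in [N]} Y_i$. Your Gr\"onwall-plus-bootstrap confinement argument is a careful, explicit justification of the neighborhood existence that the paper leaves to standard ODE theory, so it fills in detail rather than changing the approach.
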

\begin{proof}
Fix $i \in [N]$. There exists a neighborhood $Y_i \ni y_i$ such that for every $y \in Y_i$ there exists a unique solution to \eqref{e:TimeReverseDS} at time $t=0$ with final  condition given by $h(T) = Q^\ast y$. We simply take $Y = \cup_{i\in [N]} Y_i$. 
\end{proof}

\subsection{Connections with principal component analysis }
\label{subsec:PCA}
In this section, we consider again the motivating PCA-based example described in \cref{s:IntroExample}. 
Suppose we have data $X = [ x_1 \mid \cdots \mid x_N] \in \mathbb R^{d\times N}$ with $N > d$ with singular value decomposition, 
$X = U \Sigma V^\ast$, where
the singular values are arranged in decreasing order, \ie, $\sigma_1 \geq \cdots \geq \sigma_d$. 
We consider the application of the DDR method with a \emph{linear dictionary}. \ie~ $d_n = d$ and 
$\Xi(h) = [h_1, \ldots, h_d]$. 
We consider 
\begin{equation} \label{e:PropSol}
\beta =A_\varepsilon= \frac{1}{T}U \text{diag}[0,\dots,0,\log\varepsilon,\dots, \log\varepsilon ]U^\ast  
\quad \textrm{and} \quad Q= U_k^\ast . 
\end{equation}
where $\varepsilon>0$ and $U_k \in \mathbb R^{d\times k}$ are the first $k$ columns of $U$.  The following lemma shows that the proposed solution  \eqref{e:PropSol} is a stationary point for \eqref{eq:optimization} for a particular choice of $\varepsilon$.

\begin{lem}\label{lem:opt_linear}
For fixed $\mu >0$, we consider the DDR method with a linear dictionary with the notation introduced above. We consider $(\beta,Q) = (A_\varepsilon,U_k^\ast )$ given in \eqref{e:PropSol}. 
For any $\varepsilon >0$,  $d_Q J(A_\varepsilon,U_k^\ast ) = 0$.
For $\mu>0$, there exists a unique $\varepsilon^\star = \varepsilon^\star(\mu) \in [ e^{-1}, 1)$, such that 
$d_\beta J(A_{\varepsilon^\star},U_k^\ast )  = 0$.
In particular, $(\beta,Q) = (A_{\varepsilon^\star},U_k^\ast )$ is a stationary point for \eqref{eq:optimization}.
\end{lem}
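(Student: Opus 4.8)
The plan is to specialize the gradient formulas of \cref{thm:adjoint} to the linear dictionary $\Xi(h)=h$ (so $\nabla\Xi\equiv I_d$) and to exploit that, with $\beta=A_\varepsilon$, everything diagonalizes in the orthonormal basis $U$. Since $A_\varepsilon$ is symmetric with $A_\varepsilon=\frac1T U\Lambda U^\ast$ and $\Lambda=\mathrm{diag}(0,\dots,0,\log\varepsilon,\dots,\log\varepsilon)$, the forward solution is explicit, $h_i(t)=e^{A_\varepsilon t}x_i$ and $h_i(T)=(U_kU_k^\ast+\varepsilon U_{-k}U_{-k}^\ast)x_i$, exactly as in the motivating example. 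I would set $\tilde x_i=U^\ast x_i$ and work componentwise; the crucial structural fact, coming from $X=U\Sigma V^\ast$, is that the second-moment matrix is diagonal in this basis, $\frac1N\sum_i\tilde x_i\tilde x_i^\ast=\frac1N\Sigma^2$, i.e.\ $\frac1N\sum_i\tilde x_{i,j}\tilde x_{i,m}=\frac{\sigma_j^2}{N}\delta_{jm}$. This orthogonality is what collapses the matrix gradients to diagonal form and ultimately to a single scalar condition.

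First I would verify $d_QJ(A_\varepsilon,U_k^\ast)=0$ for every $\varepsilon>0$. Writing $H_T=[h_1(T)\mid\cdots\mid h_N(T)]$, one has $H_TH_T^\ast=UDU^\ast$ with $D=\mathrm{diag}(\sigma_1^2,\dots,\sigma_k^2,\varepsilon^2\sigma_{k+1}^2,\dots,\varepsilon^2\sigma_d^2)$. Substituting $Q=U_k^\ast$ into \eqref{eq:grad_Q}, a direct computation gives $QH_TH_T^\ast Q^\ast Q=QH_TH_T^\ast$ (both equal $D_kU_k^\ast$ with $D_k=\mathrm{diag}(\sigma_1^2,\dots,\sigma_k^2)$), so the two terms cancel; note this requires no ordering assumption on $\varepsilon$. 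Equivalently, one can invoke the $Q$-subproblem lemma, since $U_k^\ast$ gives the top-$k$ left singular vectors of $H_T$.

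Next I would solve the adjoint system \eqref{eq:adjoint}. With $\nabla\Xi=I_d$ and $\beta=A_\varepsilon$ symmetric, \eqref{eq:adjoint_ODE} reads $\dot\lambda_i=-A_\varepsilon\lambda_i+\frac{2\mu}{T}A_\varepsilon^2 h_i$, which decouples in the $U$-basis into scalar linear ODEs. The components $j\le k$ stay identically zero, since both the forcing and the terminal datum $\lambda_i(T)=-2(I-U_kU_k^\ast)h_i(T)=-2\varepsilon\,U_{-k}U_{-k}^\ast x_i$ vanish there; for $j>k$ one solves a first-order linear ODE with constant coefficient $-\frac{\log\varepsilon}{T}$ and exponential forcing, obtaining $\tilde\lambda_{i,j}(t)=c_j(t)\tilde x_{i,j}$ for an explicit scalar $c_j(t)$ depending only on $\varepsilon,\mu,T$. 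Substituting $\lambda_i$ and $h_i$ into \eqref{eq:grad_beta}, transforming to the $U$-basis, and using the diagonal second-moment identity then shows $d_\beta J=U G U^\ast$ with $G$ diagonal, $G_{jj}=0$ for $j\le k$, and $G_{jj}=\frac{\sigma_j^2}{N}B(\varepsilon)$ for $j>k$, where $B(\varepsilon)$ is a single scalar independent of $j$. Hence $d_\beta J=0$ is equivalent to the scalar equation $B(\varepsilon)=0$ (assuming $\sigma_j\ne0$ for some $j>k$, i.e.\ the data is genuinely higher than $k$-dimensional). It is worth cross-checking this against the one-dimensional stationarity $\frac{d}{d\varepsilon}J(A_\varepsilon,U_k^\ast)=0$ of the explicit profile $J(A_\varepsilon,U_k^\ast)=S\bigl(\varepsilon^2+\frac{\mu\log\varepsilon}{2T}(\varepsilon^2-1)\bigr)$, $S=\frac1N\sum_{j>k}\sigma_j^2$, which should agree up to the positive factor $S/(T\varepsilon)$.

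The main obstacle is the final root-location step. I would show $B$ has a unique zero in the stated interval by combining the intermediate value theorem with strict monotonicity: at $\varepsilon=1$ the regularization contribution drops out and $B(1)>0$, so the zero must lie to the left, while differentiating $B$ and using that the transcendental terms $\varepsilon^2\log\varepsilon$ and $\log\varepsilon(\varepsilon^2-1)$ are monotone on the relevant range gives $B'>0$, forcing uniqueness. The delicate point is the sign of $B$ at the lower endpoint $\varepsilon=e^{-1}$ together with the monotonicity estimate; this is where the constant $e^{-1}$ enters and is the genuinely technical part of the argument, and I would expect it to require the constraint $\beta\in\mathcal B_1(b)$ (equivalently a lower bound on $\|A_\varepsilon\|_2$-admissible $\varepsilon$) to pin the root into $[e^{-1},1)$. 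Once the unique $\varepsilon^\star$ is produced, combining $d_QJ=0$ (valid for all $\varepsilon$) with $d_\beta J(A_{\varepsilon^\star},U_k^\ast)=0$ yields that $(A_{\varepsilon^\star},U_k^\ast)$ is a stationary point for \eqref{eq:optimization}.
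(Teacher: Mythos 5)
Your proposal follows essentially the same route as the paper's proof: specialize the adjoint-method gradients of \cref{thm:adjoint} to the linear dictionary, exploit that everything diagonalizes in the $U$-basis (explicit forward solution $h_i(t)=U\,\mathrm{diag}(1,\dots,1,\varepsilon^{t/T},\dots,\varepsilon^{t/T})\,U^\ast x_i$, adjoint solution supported on the $U_{-k}$ directions and equal to a time-dependent scalar times $U_{-k}U_{-k}^\ast x_i$, obtained by solving the decoupled linear ODEs), verify $d_QJ=0$ by substitution into \eqref{eq:grad_Q} (the paper merely asserts this; your two-term cancellation is exactly the computation behind it), and collapse $d_\beta J$ to a single scalar function of $\varepsilon$ multiplying a fixed nonzero matrix built from $\{\sigma_j\}_{j>k}$, so stationarity becomes one transcendental equation. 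Your cross-check against $\frac{d}{d\varepsilon}J(A_\varepsilon,U_k^\ast)$ is legitimate precisely because the gradient is a scalar multiple of a fixed matrix; one caution, though: your closed form $S\bigl(\varepsilon^2+\frac{\mu\log\varepsilon}{2T}(\varepsilon^2-1)\bigr)$ uses the regularization normalization of \eqref{eq:obj_fun}, whereas the gradient and adjoint formulas of \cref{thm:adjoint} are derived from a Lagrangian carrying an extra factor $1/T$ on the regularizer, so you must fix one convention or the cross-check will produce a spurious mismatch (harmless only when $T=1$).

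The one genuine divergence is the root-location step, which you flag as the delicate part and whose source you misattribute. The paper does not run an intermediate-value/monotonicity argument on $B$ with endpoint sign estimates, and the constraint $\beta\in\mathcal B_1(b)$ plays no role whatsoever in pinning the root: instead, the scalar equation $f(\varepsilon,\mu)=0$ is solved \emph{for} $\mu$, yielding the explicit expression $\mu(\varepsilon)=\frac{4\varepsilon^2\log\varepsilon+4\varepsilon^2T^2}{1-\varepsilon^2}$. The interval $[e^{-1},1)$ is then simply read off as the set where this expression is nonnegative and finite — $e^{-1}$ is the zero of the numerator $4\varepsilon^2(\log\varepsilon+T^2)$ when $T=1$, and $\varepsilon\to1^-$ sends $\mu\to\infty$ — and uniqueness of $\varepsilon^\star(\mu)$ follows because $\mu(\varepsilon)$ is strictly increasing there (the paper assumes $T\geq1$ for this), so the inverse function theorem applies. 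In other words, once $B$ is computed explicitly, the "technical part" is pure algebra — solve $B=0$ for $\mu$ and check monotonicity of the resulting function of $\varepsilon$ — with no input from the feasible set or any norm bound on $\beta$; your expectation that $e^{-1}$ emerges from $\mathcal B_1(b)$ is the wrong explanation and would lead you astray. Be correspondingly careful when you execute the integration producing $B$: the endpoint of the interval in the statement is exactly as sharp as that algebra, so any slip in the coefficient of the $\varepsilon^2\log\varepsilon$ term shifts where the root can lie.
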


\begin{proof}
For this choice of $(\beta,Q)$, the solution to ODE in \eqref{eq:consts_ODE}  is simply 
$$
h_i(t) =U D(1,\varepsilon^\frac{t}{T}) U^\ast x_i.
$$
where we use the notaiton $D(a,b) = \text{diag}(a,\ldots,a,b,\ldots,b)$ where the $a$ is repeated $k$ times and $b$ is repeated $d-k$ times. 
We evaluate $d_Q J$ in \eqref{eq:grad_Q} at $Q = U_k^\ast$ to obtain 
$d_Q J = 0$.

The adjoint equation \eqref{eq:adjoint} is then written 
\begin{align*}
\frac{d\lambda_i}{dt} &=  -\frac{1}{T} U D( 0,\log\varepsilon)
U^\ast \lambda_i + \frac{2\mu}{T^3}  U  D\left(0,     \varepsilon^\frac{t}{T} \log^2 \varepsilon \right) U^\ast x_i \\
\lambda_i(T) & =-2 \varepsilon U D(0,1) U^\ast x_i.
\end{align*}
The solution to the adjoint equation, which can be derived using variation of parameters, is given by
$$
\lambda_i(t) = -\left[ 2 \varepsilon^{2-t/T}+  \frac{\mu \varepsilon \log \varepsilon}{T^2}\left(\varepsilon^{1-t/T }- \varepsilon^{-(1-t/T)}\right)\right]U D\left(0, 1 \right)U^\ast x_i. 
$$
We now compute the derivative of the objective function with respect to $\beta$ using \eqref{eq:grad_beta} and the explict solutions for $h_i(t)$ and $\lambda_i(t)$ derived above. We obtain
$$
d_\beta J = f(\varepsilon, \mu) U_{-k} D(0, 1) \Sigma U_{-k}^\ast,
 \qquad \textrm{where } \ 
 f(\varepsilon, \mu):= \frac{T}{N}\left[-\frac{\mu}{2T^2} -\left(-\frac{2\varepsilon^2 \log \varepsilon}{T^2} \right) + \varepsilon^2 \left( 2+\frac{\mu}{2T^2} \right)
\right].
$$
This gives 
$$
\| d_\beta J \|_F^2 = f^2(\varepsilon, \mu) \sum_{i=k+1}^d
 \sigma_i^2.
 $$
We claim that for fixed $\mu >0$ there exists a unique $\varepsilon_\star = \varepsilon_\star(\mu) >0$ such that $f(\mu,\varepsilon_\star(\mu)) = 0$. 
First, we solve the equation for $\mu$
 $$
 \mu(\varepsilon) = \frac{4\varepsilon^2\log \varepsilon + 4\varepsilon^2 T^2}{1-\varepsilon^2}
 $$
This function of $\varepsilon$ is a mapping from $[e^{-1},1)$ onto $[0,\infty)$ such that for $T\geq 1$ it is monotonically increasing with strictly positive derivative on $[e^{-1},1)$.  By the inverse function theorem, $\mu(\varepsilon)$ is invertible and we attain a unique $\varepsilon^\star(\mu)$ for any $\mu\geq 0$. 
\end{proof}

We can approximate the function $\varepsilon^\star(\mu)$ 
guaranteed to exist in \cref{lem:opt_linear} by expanding $\mu$ in a power series about $\varepsilon = e^{-1}$,
\[
\mu(\varepsilon) =  c_1\left(\varepsilon-e^{-1}\right) + c_2 \left(\varepsilon-e^{-1}\right)^2
 + \mathcal{O}\left(\left(\varepsilon-e^{-1}\right)^3 \right),\]
 where $c_1 = \frac{4e}{e^2-1}$ and $c_2 = \frac{2(e^2+3e^4)}{(e^2-1)^2}.$ 
Solving for $\epsilon$, we obtain the approximation 
\begin{equation}\label{eq:app_eps}
\varepsilon^\star(\mu) \approx e^{-1} + \frac{-c_1 + \sqrt{c_1^2 +4c_2\mu}}{4c_2}.
\end{equation}
We will use the approximate value \eqref{eq:app_eps} in the initialization of our method.

\section{Model implementation and numerical experiments}\label{sec:experiments}
In this section, we describe an implementation of the Dynamical Dimension Reduction (DDR) method and describe its performance on a variety of datasets, including 
an S-shaped synthetically generated dataset  (\cref{subsec:S}), 
the iris dataset (\cref{subsec:iris}), and 
handwritten digit dataset (\cref{subsec:digit}). 
We demonstrate that the DDR method attains a desirable dimension reduction and compares with embeddings generated by PCA, t-SNE and Umap. 
Source code for our implementation is available at  \url{https://github.com/rkyoon12/DDR}.

\subsection{Model implementation}
We implemented the DDR method summarized in \cref{alg:DDR} and described in \cref{subsec:main}. The optimization problem \eqref{eq:optimization} was solved using the ADAM gradient-based method, implemented via the JAX library  \verb+jax.example_libraries.optimizers.adam+ with the learning rates decaying from $lr = 0.01$ to $lr = 0.001$ as the iterations progress. The gradients of the objective function with respect to the parameters are computed by the formula in \eqref{eq:gradients}. The solutions to both the forward ODE  \eqref{eq:consts_ODE} of the hidden variable $h\colon [0,T]\to \mathbb{R}^d$  and the adjoint equation \eqref{eq:adjoint} of the Lagrange multiplier $\lambda\colon [0,T]\to \mathbb{R}^d$ are used to compute the gradients. We use the given data as the initial condition in \eqref{eq:consts_ODE} and set the terminal time $T=1$.  The forward Euler ODE solver is used with discretized interval with time step  $dt = 0.01$. 
To avoid blow-up for the solutions to ODEs in the given time interval and ensure the convergence of algorithms, we threshold the values of the state variable $h$ and adjoint variable $\lambda$ pointwise to be less than 100.  

\subsubsection*{Dictionary choice and initialization}
 Since we employ the solution of nonlinear ODEs to define the objective function, the initialization for $\beta$ is important in the convergence of our model. As shown in \cref{thm:ODE_existence}, we initialize $\beta$ to satisfy the assumption in \eqref{e:B1} for pre-specified dictionaries. 
 There is a lot of freedom in choosing the dictionary elements; one could use, \eg, polynomials, multinomials, trigonometric functions, {\it etc}. 
 For example, we choose candidate functions that are polynomials of $h$ up to degree $3$ such as 
\begin{equation}\label{eq:dictionary}
\Xi(h) = \left[\mathcal P_0(h), \mathcal{P}_1(h), \mathcal{P}_2(h), \mathcal{P}_3(h)\right]\in \mathbb R^{3d+1},
\end{equation}
 where $\mathcal{P}_k(h) = \{h_1^k, \dots, h_d^k\}$ contains $k$-th degree of polynomials. Denote the coefficients in $\beta$ corresponding to the dictionary functions $\mathcal P_k$ as $\beta_{k}$ where $k = 0,1,2,3$.  As described in \cref{subsec:PCA}, the DDR method using only the linear dictionary  $\mathcal P_1(h)$ reproduces the result of PCA  with parameters $\beta_1$ as described in \cref{lem:opt_linear}. 
 Building on this result, to initialize parameters $\beta$ for an extended dictionary, we set 
   \begin{equation*}
     \beta_{1}= U D(0,\log(\varepsilon^\star))U^\ast , \qquad Q = U_k^\ast, 
 \end{equation*}
where $\varepsilon^\star$ is chosen according to  \eqref{eq:app_eps} for a given fixed $\mu$. The remaining    entries of $\beta$ are randomly initialized via the normal distribution, $[\beta]_{ij} \sim  \mathcal{N}(0,s^2)$, where we take $s=0.5$ so that its values are relatively similar to values of  $\beta_{1}$. 

\medskip

For each dataset, we train our model several times under the different conditions on entries of dictionary, initialization, hyper-parameters. 

\subsubsection*{Other methods} We compare the DDR method with PCA,  t-SNE and Umap by plotting the lower-dimensional representations of the data. Briefly, t-SNE is a nonlinear method that preserves local structures in the data by minimizing the discrepancy between pairwise similarity in the data and the pairwise similarity in the lower-dimensional embedded data (computed using the $t$-distribution) \cite{tsne}.  Umap learns a Riemannian manifold so that the data is likely to be sampled from a uniform distribution on the manifold \cite{umap}. We use the PCA and t-SNE implementations in the scikit-learn package \verb+sklearn.decomposition.PCA+ and \verb+sklearn.manifold.TSNE+. We use the Umap implementation provided in \cite{umap}.  We use the default training settings for the compared methods.

\subsection{S-shaped manifold}\label{subsec:S}
We first test our method on a synthetically generated dataset, which is in the shape of an $S$-shaped surface embedded in three dimensions. The dataset is generated by the solution at time $T=1$ to the ODE
\begin{subequations}
\label{eq:S_curve}
 \begin{align}
    \dot{z_1} &= 2z_3^3\\
    \dot{z_2} & = 0\\
    \dot{z_3} & = -2z_1^3,
    \end{align}
\end{subequations}
with $N=400$ initial conditions given by 
$$
[z_1(0), z_2(0),z_3(0)] \in  [\textrm{meshgrid}([-1:20:1],[-1:20:1]),0].
$$  
We refer to collection of points 
$x_i  = [z_1(1),z_2(1), z_3(1)]_i,$ for $ i \in [400]$
as the \textit{S-data}; a plot of the \textit{S-data} is given in the first subplot of \cref{fig:S_cubic}(a) and colored by the  first coordinate. 
 
\begin{figure}
    \includegraphics[scale = 0.5 ]{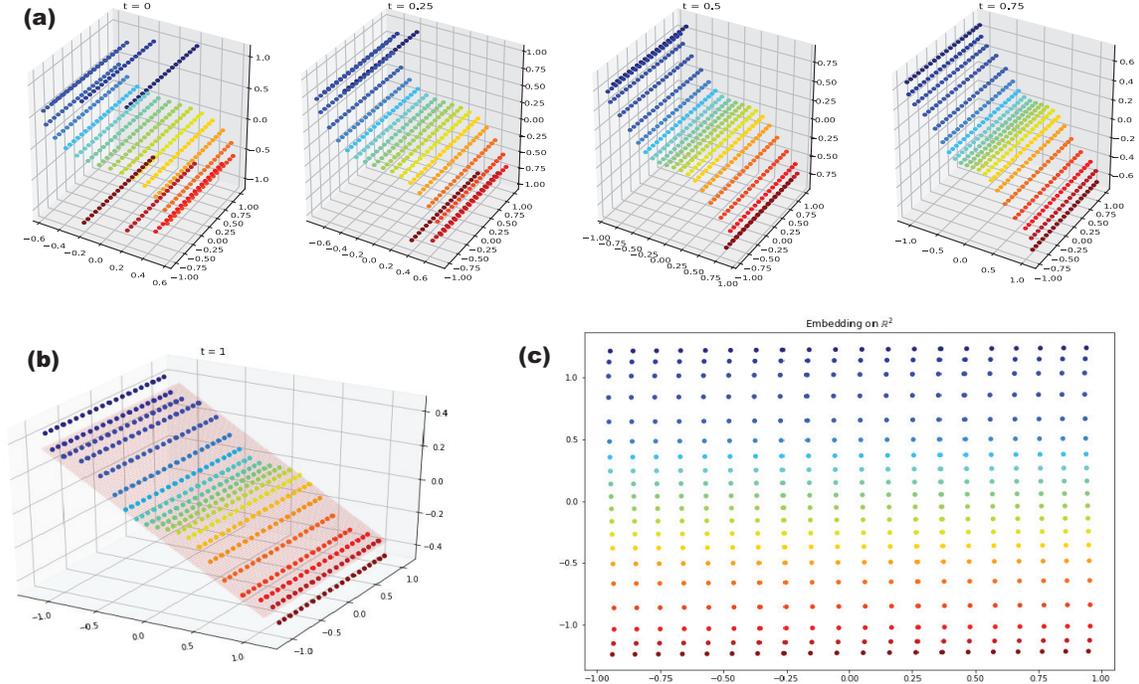}
    \caption{\textbf{(a)} We depict the positions of data over time evolving along the learned ODE by the DDR method with $\mathcal{P}_3$ in dictionary and $\mu = 0.001$. The four subfigures in (a) are the snapshots of the solution manifold at $t = 0,  0,25, 0.5$ and $0.75$. \textbf{(b)} Solution to optimal ODE at terminal $T = 1$ and $Q^\star$ subplane which is shaded red. \textbf{(c)} We plot the learned lower representations of \textit{S-data} in $\mathbb {R}^2$. See \cref{subsec:S}.}
    \label{fig:S_cubic}
\end{figure}

Since the \textit{S-data} is created by the cubic polynomial vector fields, it is natural to employ $\mathcal P_3$ functions to build a dictionary. We visualize the DDR model by plotting the evolution of the learned dynamical system and projection space. As shown in \cref{fig:S_cubic}(a), the hidden variables are initially positioned in an $S$-shaped  manifold and are gradually unfolded/flattened onto the  $Q^\star$ space over time. In \cref{fig:S_cubic}(b), we draw both hidden variables at $T$ (colored dots) and  an orthonormal subspace spanned by row vectors of $Q^\star$ (red shaded surface). The DDR method maps the \textit{S-data} to the low-dimensional representations shown in  \cref{fig:S_cubic}(c).

\subsubsection*{Hyper-parameter tuning}\label{subsec:L_curve}
The objective function of the DDR method contains two terms; a mean squared residual error ($J1$) and a kinetic energy of the data manifold traveling along the ODE ($J2$),  where a regularization hyper-parameter $\mu$ balances between $J1$ and $J2$. In practice, selecting an appropriate $\mu$ is important to reasonably train the DDR model. 
We employ the $\mathbb L$-curve criterion proposed in \cite{Lcurve} for the Tikhonov regularization hyper-parameter of the linear inverse problem. 
Denote $\beta_\mu$ and $Q_\mu$ as optimal solution to the problem
\begin{equation*}\label{mu_opt}
    \beta_\mu, Q_\mu = \argmin_{\beta,Q}  J_1 + \mu J_2,
\end{equation*}
where we use the same definition of $J_1$ and $J_2$ in \eqref{eq:regularization}. We define the curve 
\begin{equation*}\label{eq:Lcurve}
    \mathbb L =\{ J_1(\beta_\mu, Q_\mu), J_2(\beta_\mu, Q_\mu)\colon~ \mu>0\}. 
\end{equation*}
As a function of $\mu$, $J_2$ is monotonically decreasing  whereas $J_1$ is monotonically increasing. Thus the $\mathbb L$-curve has a negative slope and, in practice, takes the shape of an ``L''. Moreover, both $J_1$ and $J_2$ are equitably minimized at the elbow of $\mathbb L$-curve. In practice, we tune the regularization parameters by training the model for $\mu$ in the set {\small $ \{ 5\times 10^{-5}, 10^{-4}, 5\times 10^{-4}, 10^{-3}, 5\times 10^{-3}, 10^{-2}, 0.05, 0.1, 0.5, 1, 1.5, 2\}$} and picking $\mu$ at the elbow of the resulting $\mathbb L$-curve. 
  
We present the $\mathbb L$-curve from training the DDR method for \textit{S-data} with each $\mu$ in the above set.  \Cref{fig:L-curve}(a) shows that the vertex of $L$-curve is attained at the fourth element (numbered by $3$) in the list of $\mu,$ which is $10^{-3}$. 
In  \ref{fig:L-curve}(b), we depict the learned lower representations of the DDR method with $\mu \in  \{5\times 10^{-5}, 10^{-4}, 5\times 10^{-4}, 10^{-3} \}$. Coincided with  $\mathbb L$-curve criterion in (a), the most expected embedding is achieved with $\mu = 10^{-3}$. 

\begin{figure}
    \centering
     \includegraphics[scale=0.7]{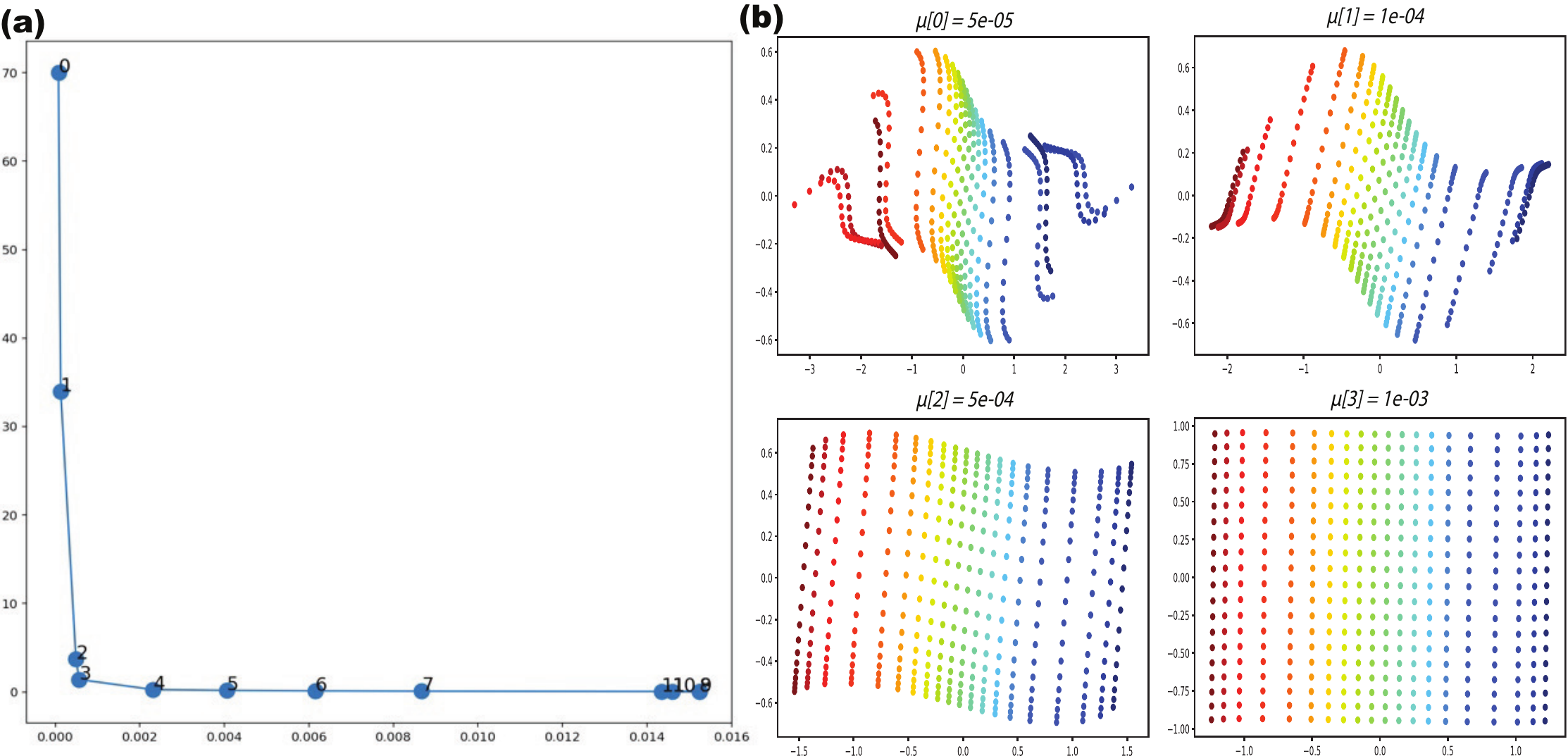}
     \caption{\textbf{(a)} Plot $\mathbb L$-curve defined as $\mathbb L=\{(J1,J2): \mu  \text{is in pre-listed set}\}$. \textbf{(b)} The lower representation of \textit{S-data} by training the DDR method under the same conditions except hyper-parameter $\mu = $\footnotesize{$\{5\times 10^{-5}, 10^{-4}, 5\times 10^{-4}, 10^{-3} \}$}.  See \cref{subsec:S}.}
    \label{fig:L-curve}
\end{figure}

\subsubsection*{Non-identifiablity}
Next, we remark that the DDR method is \textit{non-identifiable}. The \textit{S-data} could be viewed as an initial condition for the reverse ODE to \eqref{eq:S_curve} so that its solution is lying onto $span\{e_1,e_2\}$ plane at $T =1,$ where $e_i\in \mathbb R^3$ denotes a canonical basis vector whose $i$-th entry is one. It implies that the ground-truth parameters of the DDR method are exactly a coefficient of time-reversed dynamical system such that 
\begin{equation*}\label{true}
    \beta_{true}  = \footnotesize{\begin{bmatrix}
   0 &  0 &  -2\\
0 & 0 & 0\\
2 &  0 & 0\\
\end{bmatrix}},\qquad 
Q_{true} = \begin{bmatrix}
1 & 0 & 0\\
0 & 1& 0
\end{bmatrix}.
\end{equation*}
Remind that the goal of our method is finding a mapping $\mathcal{E}\colon x \mapsto y = Q h(T)$ so that only the last stage of the solution $h(T)$ should be as close as possible to $Q$ space. Hence learned vector fields and subspace may not be uniquely determined and could differ from the ground-truth. Indeed, the trained optimal parameters reported below do not agree with true parameters. 
\begin{equation*}\label{betastar}
    \beta^\star   = \footnotesize{\begin{bmatrix}
    -0.0680 &  -0.0005 &  -2.1890\\
-0.0057 & -0.0546 & -0.0287\\
0.5832 &  -0.0001 &  -0.3004\\
\end{bmatrix}},\qquad 
Q^\star  = \begin{bmatrix}
-0.9482 & -0.0139 & 0.3173\\
0.0133 & -0.9999 & -0.0040
\end{bmatrix}
\end{equation*}

\subsubsection*{Dictionary comparison}
We now consider the DDR framework with general choices of dictionaries. As formulated in \eqref{eq:dictionary}, a dictionary $\Xi$ consists of  polynomial functions of $h$ up to degree $3$. We then derive embeddings $x \xmapsto{\mathcal{E}} y$ parametrized by three cases of parameters; an initializer $(\beta_{ini},Q_{ini})$ described in  \cref{sec:experiments},  the optimizer  $(\beta_{opt},Q_{opt})$ trained by the DDR method and the ground truth $(\beta_{true},Q_{true})$ given in (\ref{true}). In \cref{fig:S_general}, the subplots (a)-(c) visualize the magnitude of all entries of each $\beta$ by varying the intensity of colors and the subplot (d)-(f) plot the resulting low representations.

\begin{figure}
    \centering
    \includegraphics[scale = 0.7]{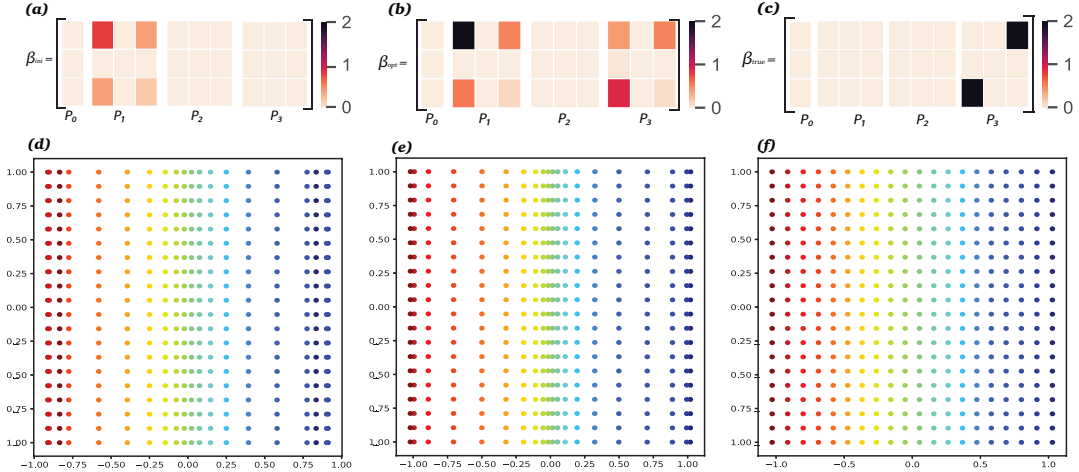}
    \caption{We represent the DDR method with a choice of dictionary with polynomials upto degree $3$ in \eqref{eq:dictionary}. \textbf{(a)-(c)} We draw heat-map about the magnitude of entries of  initial $\beta_{ini}$,  optimal $\beta_{opt}$ and ground truth $\beta_{true}$. \textbf{(d)-(f)} Plot the embedding of \textit{S-curve} in $\mathbb R^2$ generated with $(\beta_{ini},Q_{ini})$, $(\beta_{opt}, Q_{opt})$ and $(\beta_{true}, Q_{true})$ respectively. See \cref{subsec:S}.}
    \label{fig:S_general}
\end{figure}

\begin{table}
\center
\begin{tabular}{c|c|c|c}
& $(\beta_{ini},Q_{ini})$ & $(\beta_{opt}, Q_{opt})$ & $(\beta_{true}, Q_{true})$  \\
\hline
$J1$ (residual) & 0.01219 & 0.000285 & \textbf{0.00053} \\
$J2$ (regularization) & \textbf{0.00019} & 0.00069 & 0.00393 \\
$J$ (total loss) & 0.01238  &  \textbf{0.00098} &  0.00447
 \end{tabular}
 \caption{A comparison of value of objective function evaluated at  specific ($\beta$,$Q$). See \cref{subsec:S}.}
 \label{table:S_general}
 \end{table}

As pointed out in \ref{subsec:PCA}, a framework of DDR model characterized by $(\beta_{ini},Q_{ini}$) performs similarly to the PCA, where the embedding (d) formulated by an initializer is almost identical with the PCA projection. Such linear projection methods, however, couldn't capture nonlinearity in the data. As shown in Figure \ref{fig:S_general}(d),  the points located at the tail of \textit{S-data} are not recovered by any linear vector fields and are folded/overwritten on the $Q_{ini}$ space. In contrast, the DDR method encourages the underlying vector fields to be represented by nonlinear functions via the training process. In a comparison of heat maps  Figure \ref{fig:S_general}(a)-(b), the optimal coefficients corresponding to $\mathcal P_3$ being initialized by zero are activated, while the linear parts of components are still assisted. As plotted in  Figure \ref{fig:S_general}(e), the optimal lower dimensional representation perfectly rolled out $S-data$ than (d). Furthermore, we present the scores of objective functions evaluated at parameters in Table \ref{table:S_general}. By comparing the first two columns of the table, the residual error is mainly minimized, whereas the rise in regularization loss is relatively negligible. Therefore, the DDR method is established to reinforce complexity in dynamics and improve the performance of the dimension reduction mapping by minimizing a total objective function.

Next, we observe the influence of the regularization term in \eqref{eq:objective_regular} on the learning of a data manifold. Both optimal and true embedding  in \cref{fig:S_general}(e)-(f) could be considered as a good lower dimensional representation of \textit{S-data} because the  initial mesh grid is well retrieved.  As tabulated in the last column of \cref{table:S_general}, however, embedding (f) spends extensive kinetic energy of dynamics to transform the manifold. If a given manifold is forced to move by a higher speed of vector field,  then inherent properties or key structure of data could be contaminated.  Indeed, the minimum of total loss is attained at $(\beta_{opt},Q_{opt})$. Therefore, we show that the DDR method is designed to balance between projecting onto reduced dimensional space and preserving the structures of the data.

\subsubsection*{Stability}
In \cref{subsec:stability}, we prove that the DDR mapping $x_i \xmapsto{\mathcal{E}} y_i$  is stable under the noise in a given data. We numerically examine that the mapping learned with a given data is generalizable to perturbed data without retraining the model. In \cref{fig:stability}, we depict \textit{S-data} interrupted by the noise and its lower representation applied by the optimal embedding expressed by $(\beta_{opt},Q_{opt})$.  Note that four different perturbed data are created by adding a perturbation $z$ element-wise, where $z \sim  N(0,\eta^2)$ where the standard deviation of the noise $\eta$ varies in  $[0.01,0.05,0.1,0.5]$. Since the magnitude of plane  \textit{S-data} is ranged in $[-1.2,1.2]$, low dimensional representations of noisy data are reliable as long as $\eta$ is relatively small.

\begin{figure}
    \centering
    \includegraphics[scale = 0.6]{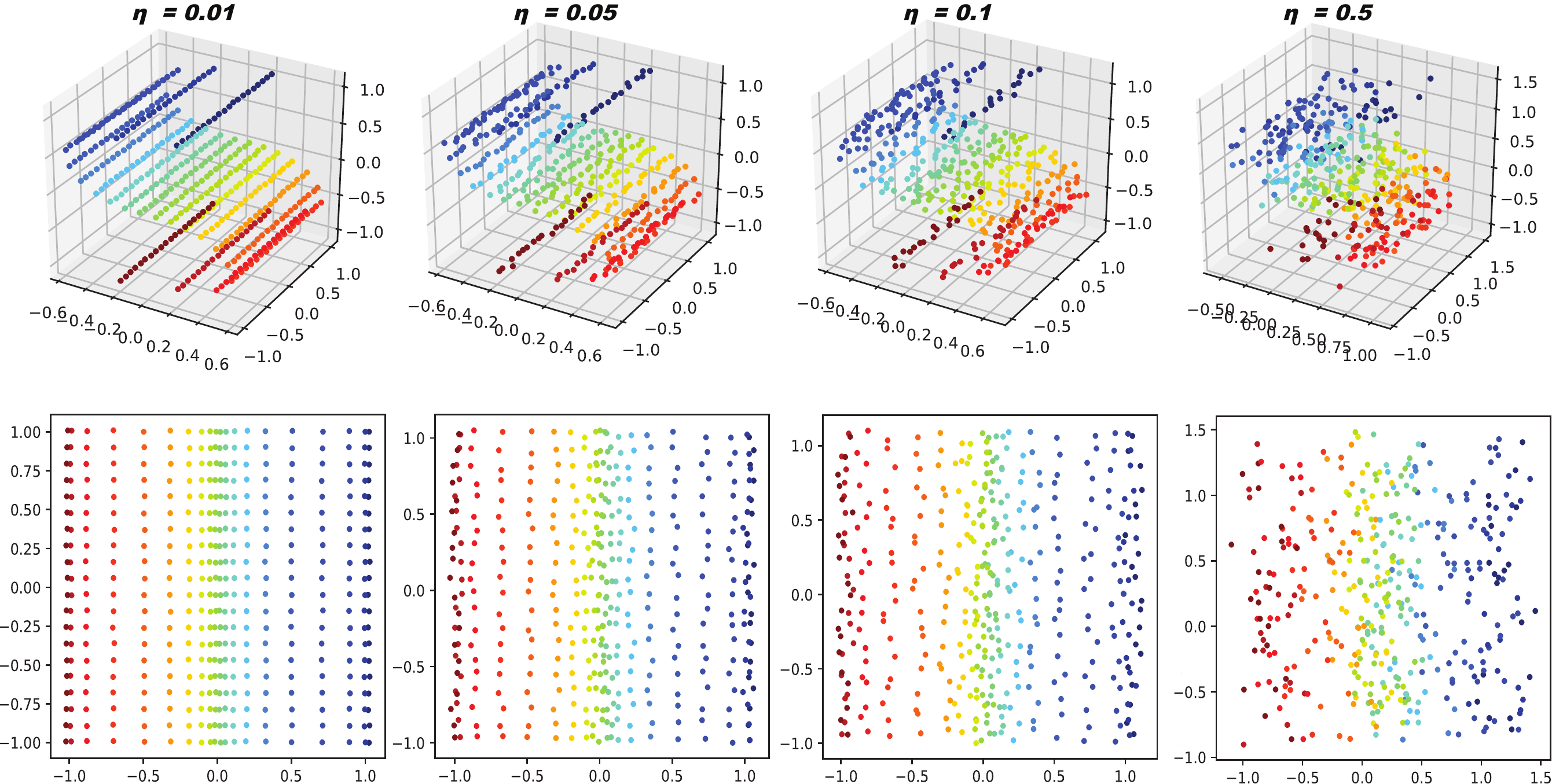}
    \caption{We apply the optimal DDR model parametrized by $(\beta_{opt},Q_{opt})$ to the \textit{S-data} that has been perturbed by random noise with standard deviation $\eta$; see \cref{subsec:S}.  }
    \label{fig:stability}
\end{figure}

\subsubsection*{Generative model for the $S$-shaped manifold.}
In \cref{s:GenModel}, we explained how the DDR method can be extended as a generative model. After training the DDS method $x \xmapsto{\mathcal{E}} y = Q^\ast h(T)$, the decoder $\mathcal{D}\colon  \mathbb R^k \to \mathbb R^d$ is defined by $\mathcal{D}(y) = h(0)$, where $h(t)$, $t\in[0,T]$ satisfies the time-reversed  dynamical system \eqref{e:TimeReverseDS}  with final condition $h(T) = Q^\ast y$. If there is zero training error, we have that $\mathcal{D}\circ \mathcal{E} = I$ on the training data. Further, in \cref{t:Decoder}, we showed that there exists a neighborhood about the embedded data, such that the decoder is well-defined. 

We further illustrate this idea using the S-shaped dataset (see \cref{fig:generate}(a)). 
We consider regularly sampled points in the latent space,  $y\in\textrm{meshgrid}\footnotesize([-1:20:1],[-1:20:1])$. 
For each $y$, we solve the time-reversed  dynamical system \eqref{e:TimeReverseDS} with initial condition $Q^\ast y$; these initial conditions are plotted in \cref{fig:generate}(b), using a triangular mesh. 
The decoded points $\mathcal{D}(y)$ are then plotted in \cref{fig:generate}(c), again using a triangular mesh. We view the map $\mathcal D \colon \mathbb R^2 \to \mathbb R^3$ as a parameterization of an approximation to the data manifold in \cref{fig:generate}(a). The approximation comes from the fact that the training error for the DDR method on this dataset is non-zero. 

\begin{figure}
    \centering
    \includegraphics[scale = 0.3]{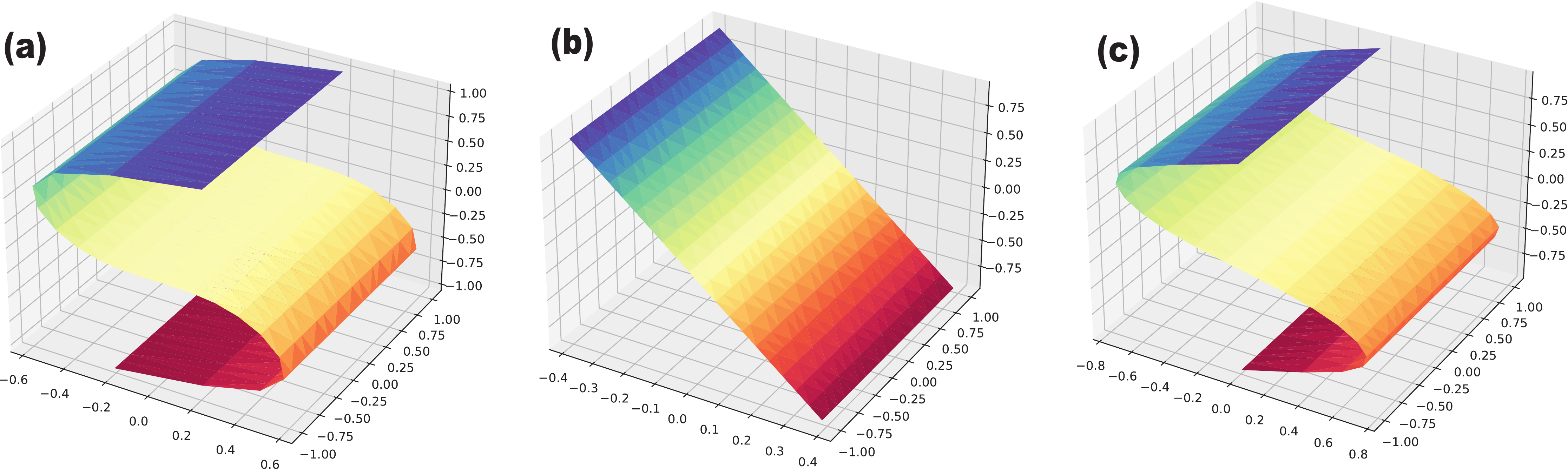}
    \caption{We extend the DDR framework for generative purposes. 
    (a) A given $S$-shaped manifold is used for training an encoder $\mathcal E $. 
    (b) We regularly sample points from the latent space and plot their image under the mapping  $y \mapsto Q^\star y$. 
    (c) Using the decoder, we plot the decoded manifold, which is an approximation to the original $S$-shaped manifold. Each of the surfaces are drawn using a triangular mesh. }
    \label{fig:generate}
\end{figure}

\subsection{Iris-data}\label{subsec:iris}
The \textit{iris} dataset contains $N=150$ instances where each data has $d=4$ features and is classified into three types of iris. The data is downloaded via \verb+sklearn.datases.load_iris()+. 
We consider embedding this $d=4$ dimensional \textit{iris} data onto $k=2$ dimensional space. For the DDR method, we conduct a hyper-parameter search using the $\mathbb L$-curve test and choose $\mu = 0.005$. 
In  \cref{fig:iris_comparison}, we plot the embedded data which are colored by their classes along with the results obtained via PCA, t-SNE, and Umap. Comparing the four methods, we observe that the DDR method clusters the data as much as the other methods. In fact,  the clustering boundary of the DDR method, especially the margin between group red and blue, is more distinct and noticeable than other methods. The nonlinear dynamics in the DDR method end up reducing the in-class variance slightly more than PCA but without collapsing the clusters as t-SNE and Umap do for this dataset. This shows that DDR method maintains both large-scaled structure and pairwise distances between dataset. 
\begin{figure}
    \centering
    \includegraphics[scale =0.6]{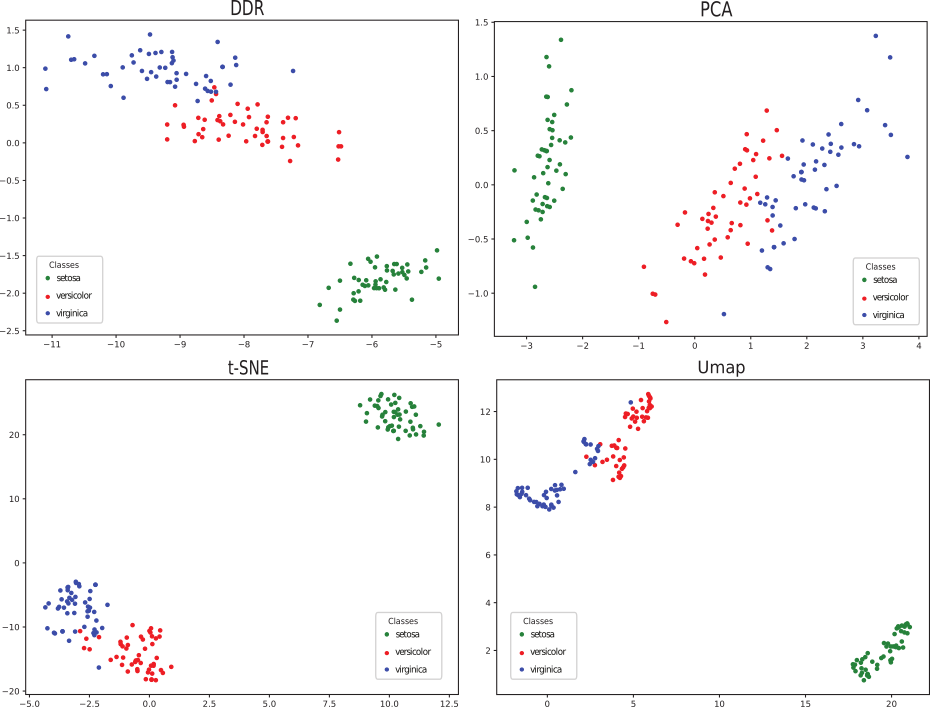}
    \caption{Comparison of projection of $iris$-data using DDR, PCA, t-SNE, and Umap. See \cref{subsec:iris}. }
    \label{fig:iris_comparison}
\end{figure}

\subsection{Handwritten digits-data}\label{subsec:digit}
The \textit{digits} data contains $8\times 8$ images of handwritten digits $0-9$. We downloaded the data from the sklearn dataset dictionary using \verb+ sklearn.datasets.load_digits()+. Note that we only use a subset of the images, digits $0-3$, so we have $N=364$ examples. 
We also normalized the data by changing the range of the pixel values from $[0,16]$ to $[0,1]$. 
To reduce computing time, we applied PCA to reduce the dimension from $64$ to $10$ dimensions. 
We examined the DDR method with extensive dictionary sweeps and hyper-parameter searching, and the best result is found with $\mathcal{P}_3$, using a random initialization, and $\mu = 0.01$. 

The resulting two-dimensional embedding obtained via the DDR method is shown in Figure \ref{fig:digit}, as well as the embeddings obtained via PCA, t-SNE, and Umap. We observe that the DDR method clusters the digits but not as strongly as t-SNE and Umap. Compared to the PCA embedding, the boundary between classes $1$, $2$, and $3$ (colored by sky blue, yellow, and brown, respectively) is better defined than PCA. This shows that the nonlinear mapping delivers more information than the linear one. However, we see a few misinterpreted instances by the DDR method (several brown dots in the yellow cloud), which may correspond to the brown island found in t-SNE and Umap subplots.

\subsubsection*{Computational time}
The training of our model depends on the initial dimension $d$ of the data and the size of the dictionary $d_n$. For, the \textit{handwritten digits} dataset, we used $d = 10$ and $d_n = 30$. 
To train the DDR model on this dataset, we used $900$ epochs taking an average of $2.2692$ seconds per epoch. In comparison,  t-SNE and Umap took less training time, $5.7168$ seconds and $16.4622$ seconds, respectively. Each of these methods used $1000$ epochs and $500$ epochs, respectively. Our implementation of the DDR method is slower than these other methods, which could be improved in future work.

\begin{figure}
    \centering
    \includegraphics[scale =0.4]{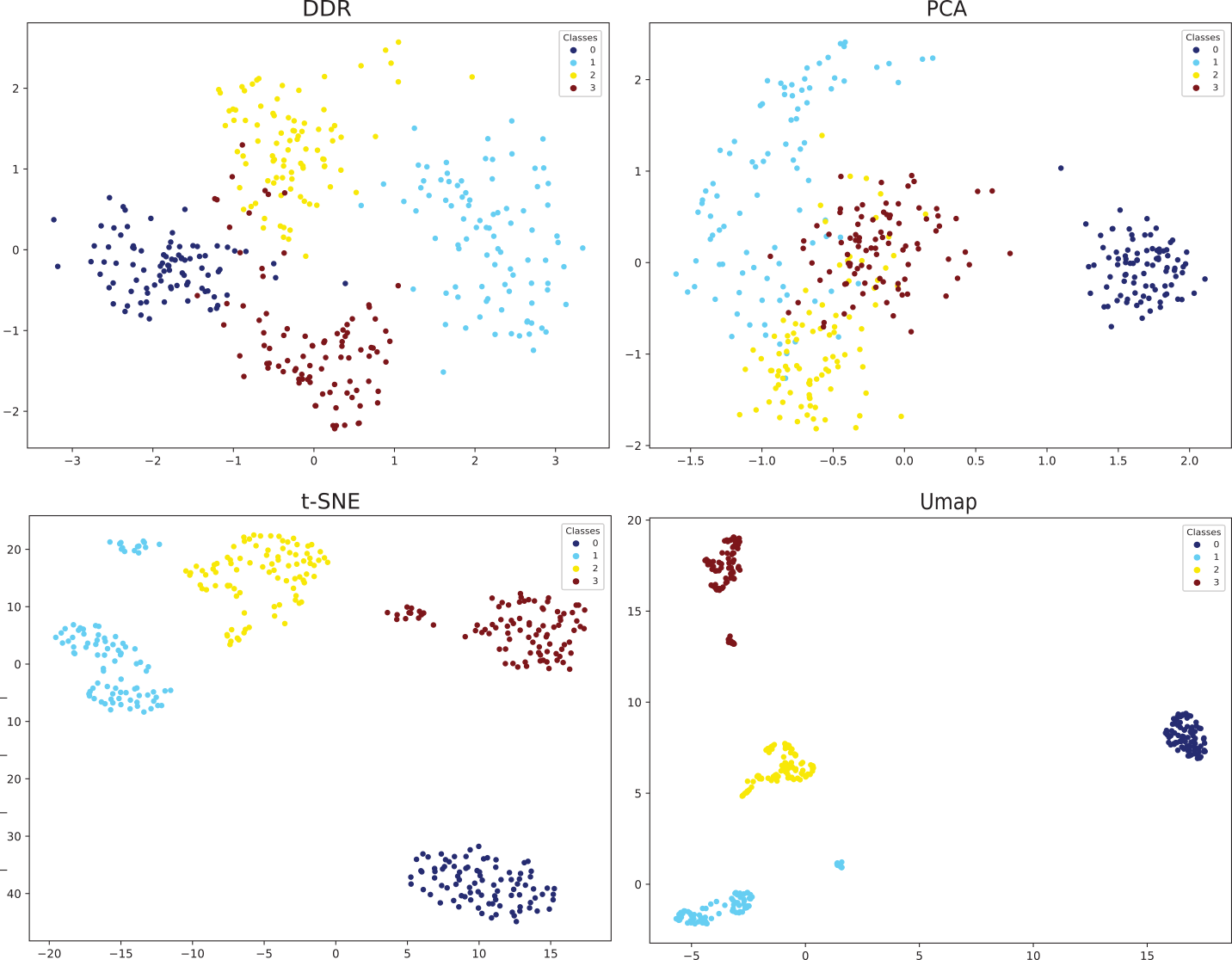}
    \caption{Comparison of embeddings for a subset of the \textit{digit}-data which consists of digits $0-3$ using DDR, PCA, t-SNE, amd Umap. See \cref{subsec:digit}.}
    \label{fig:digit}
\end{figure}

\section{Discussion} \label{s:disc}
In this work, we proposed a framework for learning a low-dimensional  representation of data based on nonlinear dynamical systems, called \emph{dynamical dimension reduction} (DDR).  In the DDR model, each point $x$ is evolved via a nonlinear flow \eqref{eq:ODE} towards a lower-dimensional subspace; the projection onto the subspace gives the low-dimensional embedding. Training the model involves identifying the nonlinear flow and the subspace. 
Following the equation discovery method, we represent the vector field that defines the flow using a linear combination of dictionary elements, where each element is a pre-specified linear/nonlinear candidate function. 
A regularization term for the average total kinetic energy is also introduced and motivated by optimal transport theory. 
We prove that the resulting optimization problem \eqref{eq:optimization} is well-posed (see \cref{thm:ex_opt}) and establish several  properties of the DDR method (see \cref{sec4}). 
We also show how the DDR method can be trained using a gradient-based optimization method, where the gradients are computed using the adjoint method from optimal control theory (see \cref{thm:adjoint}). Implementing the DDR method via \cref{alg:DDR}, we demonstrate that its performance is comparable to other dimension reduction methods including PCA, t-SNE and Umap (see in \cref{sec:experiments}). In examples, we observed that the representability/expressibility of the DDR method is improved over PCA due to the nonlinear functions in the governing vector field; to capture complex data structures, the parameters corresponding to the nonlinear dictionary elements are activated.   The t-SNE and Umap methods solely rely on local distances and PCA focuses on the global structure of the data. In contrast, the DDR method balances these objectives, minimizing not only a residual error but also the kinetic energy of the trajectories (a rate of deformation of the data manifold along the flow). 

We implemented the DDR method as a proof of concept. However, this method is slow to train because the solutions to the forward ODE for the hidden variable \eqref{eq:ODE} and the adjoint ODE \eqref{eq:adjoint} are expensive to compute.  
A natural future direction for this work is to accelerate the algorithm by using  multi-step ODE solvers and allowing the method to adaptively chose a coarser discretization. 
Furthermore, we could generalize the governing dynamical system to include non-autonomous vector fields, $\Phi$
or respect additional structure, \eg, Hamiltonian or symplectic \cite{hamiltonian,symplectic}. We could also modify the form of the dynamical system; for example, the second-order momentum equation might improve computational efficiency and long-term dependencies \cite{heavy}.

The theory of dynamical systems could be used to further prove analytical results for the DDR model. For example, while \cref{thm:stability} gives a stability result for a given DDR embedding in terms of the data, we view it as an interesting and challenging result to prove the stability of the training with respect to changes in the data as well as the consistency of the model.  
Further ideas from equation discovery could also be incorporated, such as looking for vector fields that have a sparse representation in terms of the dictionary.

\subsubsection*{Acknowledgements.} We would like to thank Harish Bhat for helpful discussions in the early stages of this work.

\printbibliography

\end{document}